\xpatchcmd{\@thm}{\thm@headpunct{.}}{\thm@headpunct{}}{}{}
\def\th@plain{%
  \thm@notefont{}
  \itshape 
}
\def\th@definition{%
  \thm@notefont{}
  \normalfont 
}
\newtheorem{theorem}{Theorem}
\newtheorem{lemma}[theorem]{Lemma}
\newtheorem{remark}{Remark}
\definecolor{myred}{HTML}{880000}
\definecolor{mygreen}{HTML}{008800}
\definecolor{myblue}{HTML}{000088}
\definecolor{linkblue}{HTML}{0000BB}
\newcommand{\R}{\mathbb{R}}
\newcommand{\E}{\mathbb{E}}
\newcommand{\x}{\mathbf{x}}
\newcommand{\A}{\mathbf{A}}
\newcommand{\Eins}{\mathrm{\textbf{1}}}
\newcommand{\gauss}{\mathcal{N}}
\renewcommand{\O}{\mathcal{O}}
\renewcommand{\P}{\mathbb{P}}
\title{Hadamard Wirtinger Flow for Sparse Phase Retrieval}
\author{
  Fan Wu,
  Patrick Rebeschini \\
	Department of Statistics, University of Oxford
}
\begin{document}

\maketitle

\begin{abstract}%
	We consider the problem of reconstructing an $n$-dimensional $k$-sparse signal from a set of noiseless magnitude-only measurements. Formulating the problem as an unregularized empirical risk minimization task, we study the sample complexity performance of gradient descent with Hadamard parametrization, which we call Hadamard Wirtinger flow (HWF). Provided knowledge of the signal sparsity $k$, we prove that a single step of HWF is able to recover the support from $k(x^*_{max})^{-2}$ (modulo logarithmic term) samples, where $x^*_{max}$ is the largest component of the signal in magnitude. This support recovery procedure can be used to initialize existing reconstruction methods and yields algorithms with total runtime proportional to the cost of reading the data and improved sample complexity, which is linear in $k$ when the signal contains at least one large component. We numerically investigate the performance of HWF at convergence and show that, while not requiring any explicit form of regularization nor knowledge of $k$, HWF adapts to the signal sparsity and reconstructs sparse signals with fewer measurements than existing gradient based methods. 
\end{abstract}

\section{Introduction}
\label{section:introduction}
Phase retrieval, the problem of reconstructing a signal from the (squared) magnitude of its Fourier (or any linear) transform, arises in many fields of science and engineering. Such a task is naturally involved in applications such as crystallography \citep{M90} and diffraction imaging \citep{BDPFSSV07}, where optical sensors are able to measure intensities, but not phases of  light waves. Due to the loss of phase information, the one-dimensional Fourier phase retrieval problem is ill-posed in general. Common approaches to overcome this ill-posedness include using prior information such as non-negativity, sparsity and the signal's magnitude \citep{F82, JEH16}, or introducing redundancy into the measurements by oversampling random Gaussian measurements or coded diffraction patterns \citep{CLS15, CC15}.

In many applications, the underlying signal is naturally sparse \citep{JEH16}. A wide range of algorithms has been devised for phase retrieval with a sparse signal, including alternating minimization (SparseAltMinPhase) \citep{NJS15}, non-convex optimization based approaches such as thresholded Wirtinger flow (TWF) \citep{CLM16}, sparse truncated amplitude flow (SPARTA) \citep{WZGAC18}, compressive reweighted amplitude flow (CRAF) \citep{ZWGC18} and sparse Wirtinger flow (SWF) \citep{YWW19}, and convex relaxation based methods such as compressive phase retrieval via lifting (CPRL) \citep{OYDS12} and SparsePhaseMax \citep{HV16}. Other approaches to sparse phase retrieval include the greedy algorithm GESPAR \citep{SBE14}, a generalized approximate message passing algorithm (PR-GAMP) \citep{SR15} and majorization minimization algorithms \citep{QP17}. 

A limitation of these algorithms is their sample complexity: the best known theoretical results require $\O(k^2\log n)$ Gaussian measurements to guarantee successful reconstruction of a generic $k$-sparse signal $\mathbf{x}^*\in\R^n$. On the other hand, it has been shown that reconstruction is possible from $\O(k\log n)$ phaseless measurements \citep{EM14}; however, there is no known algorithm which provably achieves this in polynomial time. In fact, $\O(k^2 \log n)$ quadratic measurements are necessary for a certain class of convex relaxations, on which algorithms such as CPRL are based \citep{LV13}. Other existing algorithms such as SPARTA and SparseAltMinPhase require a sample complexity of $\O(k^2\log n)$ for the initial estimation of the support of the signal. With the knowledge of the support, these (and plenty other) algorithms require only $\O(k\log n)$ samples for the subsequent reconstruction of the signal. Hence, we identify the support recovery step as the bottleneck of the sample complexity of non-convex optimization based approaches to the sparse phase retrieval problem.

Additional structural assumptions have been considered to improve the sample complexity. It has been shown that a $k$-sparse signal $\mathbf{x}^*$ can be reconstructed from $\O(k\log n)$ measurements if one is allowed to freely design the measurement vectors \citep{JOH13}, or if the signal $\mathbf{x}^*$ is assumed to be block-sparse and the number of blocks containing non-zero entries is $\O(1)$ \citep{JH17, ZWGC18}. However, exact knowledge of the additional structure as well as an algorithm designed to take advantage of it is necessary in both cases. Linear sample complexity has also been achieved assuming that the signal coefficients decay with power-law \citep{JH19}.

Another downside of the above algorithms is the fact that sparsity is enforced or promoted \textit{explicitly}. For instance, CPRL and SparsePhaseMax augment the objective function with an $\ell_1$ penalty term, which is known to promote sparsity. SWF and SPARTA include a thresholding step in their gradient updates, which projects the iterates onto the set of $k$-sparse vectors, and SparseAltMinPhase and GESPAR directly constrain the search to a $k$-dimensional subspace of $\R^n$, which needs to be carefully chosen and updated. In the case of CPRL and SparsePhaseMax, additional regularization parameters have to be tuned, while the thresholding step of SWF and SPARTA requires knowledge of the sparsity $k$.

\subsection{Our Contributions}
In this work, we analyze gradient descent with Hadamard parametrization applied to the unregularized empirical risk for the problem of (noiseless) sparse phase retrieval and propose methods for support recovery and parameter estimation. The main contributions of this paper are stated below.

First, we propose a two-stage procedure for sparse phase retrieval, which we call \textit{Hadamard Wirtinger flow} (HWF) (following the terminology used for Wirtinger flow \citep{CLS15}, which considers gradient descent applied to the unregularized empirical risk under the natural parametrization to solve phase retrieval without the assumption on sparsity).
In stage one, we estimate a \textit{single} coordinate on the support to construct a simple initial estimate, without using a sophisticated initialization scheme typically required such as the spectral initialization used in WF and SWF or the orthogonality-promoting initialization used in SPARTA.
For stage two, we consider the Hadamard parametrization, which has previously been applied to problems on sparse recovery \citep{H17, VKR19, ZYH19} and matrix factorization \citep{GWBNS17, LMZ18, ACHL19}, and apply gradient descent to the unregularized empirical risk under this parametrization.

Second, we prove that our proposed algorithm can be used to recover the support $\mathcal{S} = \{i:x^*_i\neq 0\}$ with high probability by choosing the $k$ largest components of the estimate obtained after one step of HWF, provided that $m\ge \O(\max\{k\log n,\, \log^3n\}(x^*_{max})^{-2})$ and $x^*_{min} = \Omega(1/\sqrt{k})$, where we write $x^*_{max} := \max_i |x^*_i|/\|\mathbf{x}^*\|_2$ and $x^*_{min} := \min_{i\in \mathcal{S}}|x^*_i|/\|\mathbf{x}^*\|_2$. Note that HWF does not require knowledge of the sparsity level $k$, while support recovery using one step of HWF does. With the knowledge of the support $\mathcal{S}$, plenty of algorithms provably recover the signal $\mathbf{x}^*$ under linear sample complexity $\O(k\log n)$, see e.g.\ \citep{CLS15, WZGAC18}. Thus, provided knowledge of the sparsity level $k$, one step of HWF can be used as a support recovery tool and, combined with any of the aforementioned algorithms, it results in a procedure which provably recovers $k$-sparse signals from $\O(\max\{k\log n,\, \log^3n\}(x^\star_{max})^{-2})$ phaseless measurements. 

If $x^*_{max}=\Omega(1)$, then the sample complexity of this procedure reduces to $\O(k\log n)$, provided $k\ge \O(\log^2n)$. Unlike previous results which leverage additional structural assumptions to achieve linear sample complexity, our procedure does not require knowledge of the value of $x^*_{max}$, nor does it need to be modified in any way to accommodate for this additional structure; we run the exact same algorithm regardless of the value of $x^*_{max}$.

Third, we present numerical experiments showing the low sample complexity of HWF. As a simple algorithm not requiring thresholding steps nor added regularization terms to promote sparsity, HWF is seen to \textit{adapt} to the sparsity level of the underlying signal and to reconstruct signals from a similar number of Gaussian measurements as PR-GAMP, which has been empirically shown to achieve linear sample complexity in some regimes with Gaussian signals \citep{SR15}. In particular, the numerical experiments suggest that the sample complexity required by HWF is lower than that of other gradient based methods such as SPARTA and SWF. Further, if the signal satisfies $x^*_{max}=\Omega(1)$, the reconstruction performance of HWF is seen to be greatly improved, without any modifications to the algorithm being made.

\section{Sparse Phase Retrieval}
\label{section:background}
We denote vectors and matrices with boldface letters and real numbers with normal font, and, where appropriate, use uppercase letters for random and lowercase letters for deterministic quantities. For vectors $\mathbf{u},\mathbf{v} \in \R^n$ we write $\odot$ for the Hadamard product, $(\mathbf{u}\odot \mathbf{v})_i = u_iv_i$, and, for notational simplicity, $\mathbf{u}^2 = \mathbf{u}\odot \mathbf{u}$ for taking squares entry-wise. We use the common notation $[n] := \{1,...,n\}$. Since it is impossible to distinguish $\mathbf{x}^*$ from $-\mathbf{x}^*$ using magnitude-only observations, we will often write $\mathbf{x}^*$ for the solution set $\{\pm\mathbf{x}^*\}$ and consider, for any $\mathbf{x}\in\R^n$, the distance $\operatorname{dist}(\mathbf{x},\mathbf{x}^*) :=  \min\{\|\mathbf{x}-\mathbf{x}^*\|_2, \|\mathbf{x}+\mathbf{x}^*\|_2\}$. Further, we assume $\|\mathbf{x}^*\|_2=1$ for notational simplicity; note that this assumption is not needed for our results.

The goal in phase retrieval is to reconstruct an unknown signal vector $\mathbf{x}^*\in \R^n$ from a set of quadratic measurements $Y_j = (\mathbf{A}_j^T\mathbf{x}^*)^2$, $j = 1,...,m,$ where we observe $\mathbf{A}_j \sim \mathcal{N}(0,\mathbf{I}_n)$ i.i.d.. For the sake of clarity, we focus on the real-valued model. Our proposed algorithm also works in the complex-valued Gaussian model, where $\mathbf{x}^*\in\mathbb{C}^n$ and $\mathbf{A}_j\sim \mathcal{N}(0,\frac{1}{2}\mathbf{I}_n) + i\mathcal{N}(0,\frac{1}{2}\mathbf{I}_n)$.

Many methods have been devised to solve this problem. A popular class of algorithms performs alternating projections onto different constraint sets; these include the seminal error reduction algorithm proposed by \cite{GS72} and alternating minimization (AltMinPhase) \citep{NJS15}. Another more recent approach is based on non-convex optimization: Wirtinger flow (WF) \citep{CLS15} and its variants \citep{CC15, ZZLC17}, truncated amplitude flow (TAF) \citep{WGE17} and the trust region method of \citep{SQW18} all minimize the empirical risk (which is non-convex due to the missing phase) based on different loss functions. The convex alternatives typically use matrix-lifting as in PhaseLift \citep{CL12, CSV13} and PhaseCut \citep{WAM15}, which allows phase retrieval to be formulated as a semidefinite programming problem, or consider a non-lifting convex relaxation and solve the dual problem as in PhaseMax \citep{GS18}. 

Our approach for estimating the signal $\mathbf{x}^*$ follows the established approach of empirical risk minimization. Writing $\mathbf{z} = (y,\mathbf{a})\in\R\times\R^n$ for an observation, we consider the loss function $\ell(\mathbf{x},\mathbf{z}) =\frac{1}{4}((\mathbf{a}^T\mathbf{x})^2-y)^2$ and, given samples $\mathbf{Z}_1,...,\mathbf{Z}_m$, the empirical risk
\begin{equation}\label{eq:risk}
F(\mathbf{x}) = \frac{1}{4m}\sum_{j=1}^m \left((\mathbf{A}_j^T\mathbf{x})^2 - Y_j\right)^2.
\end{equation}
It is worth mentioning that in previous applications the amplitude-based loss function $\ell(\mathbf{x},\mathbf{z}) =\frac{1}{2}(|\mathbf{a}^T\mathbf{x}|-\sqrt{y})^2$ has been numerically shown to be more effective in terms of sample complexity than the loss function based on squared magnitudes \citep{WGE17, ZZLC17, WZGAC18}. However, with our parametrization, we found the squared magnitude-based loss function to be more effective. 

Without any restrictions on the signal $\mathbf{x}^*\in\R^n$, $2n-1$ Gaussian measurements suffice for $\mathbf{x}^*$ to be the unique (up to global sign) minimizer of $F(\mathbf{x})$ with high probability \citep{BCE06}. If $\mathbf{x}^*$ is $k$-sparse, then it has been shown in \citep{LV13} that
\begin{equation}\label{eq:problem1}
\{\pm\mathbf{x}^*\} = \underset{\mathbf{x}:\|\mathbf{x}\|_0\le k}{\operatorname{argmin}} F(\mathbf{x})
\end{equation}
holds with high probability if we have $m \ge 4k-1$ Gaussian measurements.

Solving (\ref{eq:problem1}) involves two main difficulties: $(i)$ the objective function $F$ is non-convex with potentially many local minima and saddle points, and $(ii)$ due to the constraint $\|\mathbf{x}\|_0\le k$ the problem is of combinatorial nature and NP-hard in general. 

Regarding the first difficulty, the non-convexity is typically addressed by using a spectral or orthogonality-promoting initialization, which produces an initial estimate close to $\mathbf{x}^*$ and in a region where the the objective function is locally strongly convex, leading to linear convergence towards $\mathbf{x}^*$ \citep{CLS15, CC15, WGE17, ZZLC17}. Recently, it has been shown that such an initialization is not always necessary in the phase retrieval problem and that a random initialization can be used instead \citep{SQW18, CCFM19}. 

Addressing the second difficulty, an approach replacing the constraint $\|\mathbf{x}\|_0\le k$ in (\ref{eq:problem1}) by adding a penalty term $\lambda \|\mathbf{x}\|_1$ was proposed in \citep{YZX13}. However, this procedure requires tuning of the regularization parameter $\lambda$ to reach a desired sparsity level, and it is tailored for Fourier measurements only (in particular, it uses the fact that the DFT matrix is unitary). Recently, methods enforcing the constraint $\|\mathbf{x}\|_0\le k$ via a hard-thresholding step have received a lot of attention. These include SPARTA \citep{WZGAC18}, CRAF \citep{ZWGC18} and SWF \citep{YWW19}. However, the implementation of such a thresholding step requires knowledge of $k$ (or a suitable upper bound).

Our proposed method does not have to deal with these difficulties. We also approach the problem by minimizing the objective $F(\mathbf{x})$. However, unlike existing algorithms, we do not need to add any penalty term to the objective or to introduce a thresholding step to enforce the constraint $\|\mathbf{x}\|_0\le k$. Our simulations show that the iterates of gradient descent with Hadamard parametrization remain (approximately) in the low-dimensional space of sparse vectors. Hence, we neither need to tune any regularization parameters, nor do we need knowledge of the underlying signal sparsity. Further, HWF does not need the sophisticated initialization scheme commonly used in non-convex optimization based approaches to (sparse) phase retrieval.
\section{Hadamard Wirtinger Flow}
\label{section:hwf}
Consider the parametrization $\mathbf{x} = \mathbf{u}\odot \mathbf{u} - \mathbf{v}\odot \mathbf{v}$. Such a parametrization has previously been applied to problems such as sparse recovery \citep{H17, VKR19, ZYH19} and matrix factorization \citep{GWBNS17, LMZ18, ACHL19}. Exploiting the restricted isometry property (RIP) assumed for these problems, the Hadamard parametrization has been shown to confine the gradient iterates to the low-dimensional spaces of sparse vectors and low-rank matrices, respectively.

Overloading the notation, we write 
\begin{equation*}
F(\mathbf{u},\mathbf{v}) = \frac{1}{4m}\sum_{j=1}^m \big((\mathbf{A}_j^T(\mathbf{u}^2-\mathbf{v}^2))^2 - Y_j\big)^2,
\end{equation*}
for the empirical risk, with gradients 
\begin{align*}
\nabla_{\mathbf{u}} F(\mathbf{u},\mathbf{v})  &= \frac{2}{m}\sum_{j=1}^m\big((\mathbf{A}_j^T(\mathbf{u}^2-\mathbf{v}^2))^2 - (\mathbf{A}_j^T\mathbf{x}^*)^2\big) (\mathbf{A}_j^T(\mathbf{u}^2-\mathbf{v}^2))\mathbf{A}_j \odot \mathbf{u} \\
 &= 2\nabla F(\mathbf{x}) \odot \mathbf{u},
\end{align*}
and, similarly, $\nabla_{\mathbf{v}} F(\mathbf{u},\mathbf{v}) = -2\nabla F(\mathbf{x})\odot \mathbf{v}$. We consider gradient descent in this parametrization,
\begin{equation}
\label{eq:hwf}
\begin{gathered}
\mathbf{X}^t = \mathbf{U}^t\odot \mathbf{U}^t - \mathbf{V}^t\odot \mathbf{V}^t, \\
\mathbf{U}^{t+1} = \mathbf{U}^t\odot \big(\Eins_n - 2\eta\nabla F(\mathbf{X}^t)\big), \\
 \mathbf{V}^{t+1} = \mathbf{V}^t\odot \big(\Eins_n + 2\eta\nabla F(\mathbf{X}^t)\big),
\end{gathered}
\end{equation}
where we denote by $\Eins_n\in \R^n$ the vector of all ones.
 
The reason why the Hadamard parametrization promotes sparsity is that this parametrization turns the additive updates of gradient descent into multiplicative updates. If we choose a small initialization, then, in the aforementioned problems with RIP assumptions, the multiplicative updates have been shown to lead to off-support variables staying negligibly small while support variables are being fitted. The variables grow exponentially, but at different (time-varying) rates, with off-support variables growing at a smaller rate than support variables. With additive updates, off-support variables would not stay sufficiently small and the algorithm would typically converge towards non-sparse local minima.
We provide a more detailed discussion on the role of the Hadamard parametrization in the appendix, which suggests the following initialization:
\begin{align}
\label{eq:initialization}
\mathbf{V}^0 = \alpha \Eins_n, \qquad U^0_i = \begin{cases} \Bigl(\frac{\hat{\theta}}{\sqrt{3}} + \alpha^2\Bigr)^{\frac{1}{2}} &i=I_{max}\\
\alpha & i\neq I_{max}
\end{cases}
\end{align}
where we write $\hat{\theta} = (\frac{1}{m}\sum_{j=1}^mY_j)^{1/2}$ for the estimate of the signal size $\|\x^*\|_2$ (see e.g.\ \cite{CLS15, WGE17}), $I_{max} = \operatorname{argmax}_{i} \sum_{j=1}^mY_j A_{ji}^2$ and $\alpha>0$ is the initialization size. 

We show in the next section that if $|x^*_{I_{max}}|\ge \frac{1}{2}x^*_{max}$ and $m$ is sufficiently large, then, with high probability, we can recover the support by running one step of (\ref{eq:hwf}).
The probability of finding a large coordinate $I_{max}$ can be increased by allowing multiple restarts and considering not only the largest, but also a few more instances in $\{R_i\}_{i=1}^n := \{\frac{1}{m}\sum_{j=1}^mY_jA_{ji}^2\}_{i=1}^n$. Specifically, if we allow $\bar{b}$ restarts, we consider different initialization as in (\ref{eq:initialization}) using each of the $\bar{b}$ largest instances in $\{R_i\}_{i=1}^n$.
As pointed out in Section \ref{section:background}, $\mathbf{x}^*$ is the sparsest minimizer of the objective $F$. Given the results from multiple runs, we can therefore choose the (approximately) sparsest solution, by which we mean the solution where the fewest coordinates make up most (e.g.\ $95\%$) of the norm $\|\mathbf{X}^{\bar{t}}\|_2$. This is summarized in Algorithm \ref{alg:wfrmr}.

\begin{algorithm}[h]
   \caption{Hadamard Wirtinger flow, $\bar{b}$ restarts}
   \label{alg:wfrmr}
\begin{algorithmic}
   \STATE {\bfseries Input:} observations $\{Y_j\}_{j=1}^m$, measurement vectors $\{\mathbf{A}_j\}_{j=1}^m$, step size $\eta$, iterations $\bar{t}$, initialization size $\alpha$, number of restarts $\bar{b}$, sparsity tolerance $\kappa$
	 \STATE \vspace{-3mm}
   \FOR{$b=1$ {\bfseries to} $\bar{b}$}
	 \STATE Set $I_b$ to the $b^{th}$ largest instance in $\{R_i\}_{i=1}^n$
	 \STATE \vspace{-0mm} Set \hspace{1.5mm} $\mathbf{U}^{0,b} = \mathbf{V}^{0,b} = \alpha \Eins_n$,\hspace{3mm} $U^{0,b}_{I_b} = \Big(\frac{\hat{\theta}}{\sqrt{3}} + \alpha^2\Big)^{\frac{1}{2}}$ \vspace{-0.7mm}
	 \FOR{$t=0$ {\bfseries to} $\bar{t}$}
		\STATE \vspace{-6.5mm}
		\begin{align*}
		&\mathbf{X}^{t,b} = \mathbf{U}^{t,b} \odot \mathbf{U}^{t,b} - \mathbf{V}^{t,b} \odot \mathbf{V}^{t,b} \\
		&\mathbf{U}^{t+1,b} = \mathbf{U}^{t,b} \odot \big(\Eins_n - 2\eta\nabla F(\mathbf{X}^{t,b})\big) \\
				&\mathbf{V}^{t+1,b} = \mathbf{V}^{t,b} \odot \big(\Eins_n + 2\eta\nabla F(\mathbf{X}^{t,b})\big)
		\end{align*}
		\vspace{-6.5mm}
	 \ENDFOR
   \ENDFOR
	 \STATE Set $B_{min}$ to be the index minimizing
	\begin{equation*}
	\underset{\mathcal{C}\subset [n]}{\min}\Biggl\{|\mathcal{C}|:\sum_{i\in C} (\mathbf{X}^{\bar{t},b}_i)^2 \ge (1-\kappa)\sum_{i=1}^n (\mathbf{X}^{\bar{t},b}_i)^2\Biggr\}
	\end{equation*}
	 \STATE {\bfseries Return:} $\mathbf{X}^{\bar{t},B_{min}}$
\end{algorithmic}
\end{algorithm}

\section{Support recovery} 
\label{section:supportrecovery}
In this section we show, assuming $x^*_{min}=\Omega(1/\sqrt{k})$, that one step of Algorithm \ref{alg:wfrmr} can be used to recover the support $\mathcal{S}$ from $\O(\max\{k\log n, \, \log^3n\}(x^*_{max})^{-2})$ Gaussian measurements. As pointed out in Section \ref{section:background}, the sample complexity bottleneck of reconstruction algorithms such as SPARTA and SparseAltMinPhase lies in the initial support recovery. In particular, both algorithms require $\O(k^2\log n)$ measurements to guarantee successful support recovery; with the knowledge of the support $\mathcal{S}$, these and plenty other algorithms such as WF, TAF and PhaseLift only require a sample complexity of $\O(k\log n)$ to guaruantee successful reconstruction of a $k$-sparse signal $\mathbf{x}^*$. 
  
The main difference from previous work is that we only need a single coordinate $i$ with $|x^*_i|\ge \frac{1}{2}x^*_{max}$ rather than the full support for our initialization. Therefore, our sample complexity depends on $x^*_{max}$, which is at least $1/\sqrt{k}$, rather than $x^*_{min}$, which can be at most $1/\sqrt{k}$. This is made precise in the following Lemma.
\begin{lemma}(Support recovery)\label{lemma:suprec} Let $\mathbf{x}^*\in \R^n$ be any $k$-sparse vector with $x^*_{min} = \Omega(1/\sqrt{k})$, and assume that we are given measurements $\{Y_j = (\mathbf{A}_j^T\mathbf{x}^*)^2\}_{j=1}^m$, where $\mathbf{A}_j\sim\mathcal{N}(0,\mathbf{I}_n)$, $j = 1,\dots, m$, are i.i.d.\ Gaussian vectors. If $m\ge \O(\max\{k\log n,\, \log^3n\}(x^*_{max})^{-2})$, then, with probability at least $1-\O(n^{-10})$, choosing the largest instance in $\{\frac{1}{m}\sum_{j=1}^m Y_jA_{ji}^2\}_{i=1}^n$ returns an index $i$ with $|x^*_i| \ge \frac{1}{2}x^*_{max}$.\\
Further, let $\mathbf{X}^1$ be the estimate obtained from running one step of Algorithm \ref{alg:wfrmr} with any $\eta,\alpha\in (0,\frac{1}{10})$ (and $\bar{b}=1$, i.e. no multiple restarts). Then, with the same probability, we can recover the  support $\mathcal{S}=\{i:x^*_i\neq 0\}$ by choosing the $k$ largest coordinates of $|\mathbf{X}^1|$ (where $|\cdot|$ denotes taking absolute values coordinate-wise).
\end{lemma}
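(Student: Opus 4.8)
The plan is to reduce each claim to a Gaussian moment (Wick/Isserlis) computation of a mean, followed by a concentration argument, throughout using the normalization $\|\x^*\|_2 = 1$ and the bound $x^*_{max}\ge 1/\sqrt{k}$.

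For the first claim, I would first compute, writing $R_i = \frac1m\sum_j Y_j A_{ji}^2$, the expectation $\E[R_i] = \E[(\mathbf{A}_j^T\x^*)^2 A_{ji}^2] = 1 + 2(x^*_i)^2$, using that $(\mathbf{A}_j^T\x^*, A_{ji})$ is bivariate Gaussian with covariance $x^*_i$ together with the identity $\E[G^2H^2] = \E[G^2]\E[H^2] + 2\E[GH]^2$. Thus the population ordering of the $R_i$ tracks $|x^*_i|$, and any coordinate with $|x^*_i| < \tfrac12 x^*_{max}$ has mean at least $\tfrac32(x^*_{max})^2$ below that of a maximal coordinate. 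It then suffices to show every $R_i$ concentrates within, say, $\tfrac12(x^*_{max})^2$ of its mean and union bound over the $n$ coordinates. The summands $(\mathbf{A}_j^T\x^*)^2 A_{ji}^2$ are squared products of Gaussians, hence heavy-tailed (sub-Weibull of order $1/2$), which I would control by truncating at level $\tau\asymp\log^2 n$ --- chosen so that no truncation occurs over all $mn$ terms with probability $1-\O(n^{-10})$ --- and then applying Bernstein's inequality to the bounded, variance-$\O(1)$ truncated summands. The Bernstein variance term gives the requirement $m\gtrsim\log n/(x^*_{max})^4$, which is at most $k\log n/(x^*_{max})^2$ since $x^*_{max}\ge 1/\sqrt{k}$, while the truncation term contributes the $\log^3 n/(x^*_{max})^2$ part of the stated bound.

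For the second claim, I would write out one HWF step from the initialization (\ref{eq:initialization}). Since $U^0_i = V^0_i = \alpha$ off $I_{max}$, the initial iterate $\mathbf{X}^0$ is supported only on $I_{max}$, with $X^0_{I_{max}} = \hat{\theta}/\sqrt{3} =: c_0$, and for every $i\neq I_{max}$ the multiplicative update collapses exactly to $X^1_i = -8\eta\alpha^2\,\nabla_i F(\mathbf{X}^0)$. The key structural computation is the expected gradient: evaluating $\nabla_i F(\mathbf{X}^0) = \frac1m\sum_j\big((\mathbf{A}_j^T\mathbf{X}^0)^2 - Y_j\big)(\mathbf{A}_j^T\mathbf{X}^0)A_{ji}$ at $\mathbf{X}^0 = c_0\,\mathbf{e}_{I_{max}}$ and applying Wick's theorem gives $\E[\nabla_i F(\mathbf{X}^0)] = -2c_0\,x^*_{I_{max}}\,x^*_i$ for $i\neq I_{max}$ (the choice $c_0 = \hat{\theta}/\sqrt{3}$ with $\hat{\theta}\approx 1$ being exactly what makes the gradient at $I_{max}$ itself small). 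Hence $\E[X^1_i]\propto x^*_i$: it is bounded away from zero on $\mathcal{S}$ and vanishes off $\mathcal{S}$. Recovery by the $k$ largest $|X^1_i|$ therefore reduces to the separation $\max_{i\notin\mathcal{S}}|\nabla_i F(\mathbf{X}^0)| < \min_{i\in\mathcal{S}\setminus\{I_{max}\}}|\nabla_i F(\mathbf{X}^0)|$, together with the easy fact that $X^1_{I_{max}} = \Theta(1)$ dominates all coordinates.

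To establish this separation at the right sample complexity I would treat the two coordinate types differently, which I expect to be the crux of the argument. For off-support $i$ the factor $A_{ji}$ is independent of $(\mathbf{A}_j^T\x^*, A_{j,I_{max}})$, so $\nabla_i F(\mathbf{X}^0)$ is conditionally Gaussian given the remaining coordinates and concentrates at the clean sub-Gaussian scale $\sqrt{\log n/m}$; for on-support $i$ I would expand $\mathbf{A}_j^T\x^* = x^*_i A_{ji} + (\text{independent})$ and decompose the summand into a dominant term linear in $A_{ji}$ (again conditionally Gaussian, deviation $\sqrt{\log n/m}$) plus higher-order terms whose coefficients carry factors $x^*_i$ or $(x^*_i)^2$ and are therefore negligible. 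Since the signal gap is $\min_{i\in\mathcal{S}\setminus\{I_{max}\}}|\E[\nabla_i F]|\asymp x^*_{max}x^*_{min}\asymp x^*_{max}/\sqrt{k}$, demanding $\sqrt{\log n/m}\lesssim x^*_{max}x^*_{min}$ yields $m\gtrsim\log n/\big((x^*_{max})^2(x^*_{min})^2\big)\asymp k\log n/(x^*_{max})^2$, matching the bound. The main obstacles are (i) extracting the sub-Gaussian dominant part in the on-support analysis so as not to lose extra logarithmic or $k$ factors to the heavy-tailed remainder, and (ii) the statistical dependence of $\mathbf{X}^0$ on the same data through $\hat{\theta}$ and $I_{max}$, so that $\nabla F(\mathbf{X}^0)$ is not a sum of independent terms at fixed $\mathbf{X}^0$; I would resolve this by conditioning on $I_{max}$ (union bounding over its $n$ possible values, or restricting to the event of the first claim) and by substituting the concentrated value $\hat{\theta}\approx 1$, tracking the resulting perturbation.
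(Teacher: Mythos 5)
Your proposal is correct and takes essentially the same route as the paper: the same moment computation plus truncation-and-Bernstein concentration (with union bound over coordinates) for the first claim, and for the second the same reduction to the structure of the population gradient at $\mathbf{X}^0=(\hat{\theta}/\sqrt{3})\,\mathbf{e}_{I_{max}}$ — which vanishes off the support and has magnitude of order $x^*_{max}x^*_{min}$ on it — with the data dependence of $\mathbf{X}^0$ resolved exactly as in the paper, namely by a union bound over the $n$ candidate values of $I_{max}$ (the paper's deterministic surrogates $\mathbf{x}^{(l)}$) together with separate concentration of $\hat{\theta}$. Your departures are local rather than structural: the exact identity $X^1_i=-8\eta\alpha^2\nabla F(\mathbf{X}^0)_i$ for $i\neq I_{max}$ is a slightly cleaner reduction of support recovery to gradient separation than the paper's coordinate-wise inequalities on $U^1_i$ and $V^1_i$, and your conditional-Gaussian treatment of the gradient deviations is a valid substitute for the paper's reuse of the truncation/Bernstein bound, serving the identical purpose at the same sample-complexity scale.
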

The proof of Lemma \ref{lemma:suprec} relies on standard concentration results and is deferred to the appendix.

Lemma \ref{lemma:suprec} shows that, provided knowledge of the sparsity level $k$, Algorithm \ref{alg:wfrmr} can be used to recover the support of a $k$-sparse signal $\mathbf{x}^*$ from $\O(\max\{k\log n,\, \log^3n\}(x^*_{max})^{-2})$ Gaussian measurements, which, provided $k\ge \O(\log^2n)$, matches the best known bounds $\O(k^2\log n)$ \citep{NJS15, WZGAC18} in the worst case, while it is an improvement if $\mathbf{x}^*$ contains 
(at least) one large coordinate. For instance, if $x^*_{max}=\Omega(1)$ and $k\ge \O(\log^2n)$, only $\O(k\log n)$ samples are required for support recovery.

We validate this theoretical result in the following experiment. Let $\mathbf{x}^*\in \R^{10000}$ be a $k$-sparse signal with randomly sampled support $\mathcal{S}=\{i_1,...,i_k\}$ and normalized to $\|\mathbf{x}^*\|_2=1$. We consider maximum signal values $(i)$ $x^*_{max} = \frac{1}{\sqrt{k}}$, $(ii)$ $x^*_{max} = k^{-0.25}$ and $(iii)$ $x^*_{max} = 0.7$. In case $(i)$, we set $x^*_{i_j} = \pm \frac{1}{\sqrt{k}}$ at random for all $j=1,...,k$. For the cases $(ii)$ and $(iii)$, we fix $x^*_{i_1} = x^*_{max}$, sample the other components from $x^*_{i_j}\sim\mathcal{N}(0,1)$ i.i.d., and then normalize them to satisfy $\|\mathbf{x}^*\|_2=1$. We also consider a signal with $x^*_{i_j}\sim \mathcal{N}(0, 1)$ i.i.d.\ normalized to $\|\mathbf{x}^*\|_2=1$, without any restrictions on $x^*_{max}$. We generate $m=5000$ measurements $Y_j = (\mathbf{A}_j^T\mathbf{x}^*)^2$ with $\mathbf{A}_j\sim \mathcal{N}(0, \mathbf{I}_n)$ i.i.d.. 

For our method (HWF) we run one step of Algorithm \ref{alg:wfrmr} and pick the $k$ largest components of $|\mathbf{X}^1|$. We compare it with the support recovery methods used in SPARTA and SparseAltMinPhase (other algorithms like SWF and CRAF use the same support recovery method as SPARTA). Note that although correct identification of the full support is required for the theoretical guarantees of algorithms like SPARTA, it is not necessary in practice: it has been noted in \citep{WZGAC18} that, since the estimated support $\hat{\mathcal{S}}$ is only used for the orthogonality-promoting initialization, SPARTA can be successful as long as the initial estimate is sufficiently close to the underlying signal (more precisely, $\operatorname{dist}(\mathbf{X}^0, \mathbf{x}^*)\le \frac{1}{10}\|\mathbf{x}^*\|_2$), regardless of whether or not the full support has been correctly identified. Intuitively, the initialization produces an estimate sufficiently close to $\mathbf{x}^*$ as long as the majority of the support is recovered. This intuition has been made rigorous for an alternative spectral initialization in \citep{JH17}.

We evaluate the proportion of correctly recovered support variables
\begin{equation*}
\frac{|\hat{\mathcal{S}}\cap \mathcal{S}|}{|\mathcal{S}|}
\end{equation*}
obtained from 100 independent Monte Carlo trials, where $\hat{\mathcal{S}}\subset [n]$ denotes the estimated support.

\begin{figure}[t]
\begin{center}
\centerline{\includegraphics[width=\columnwidth]{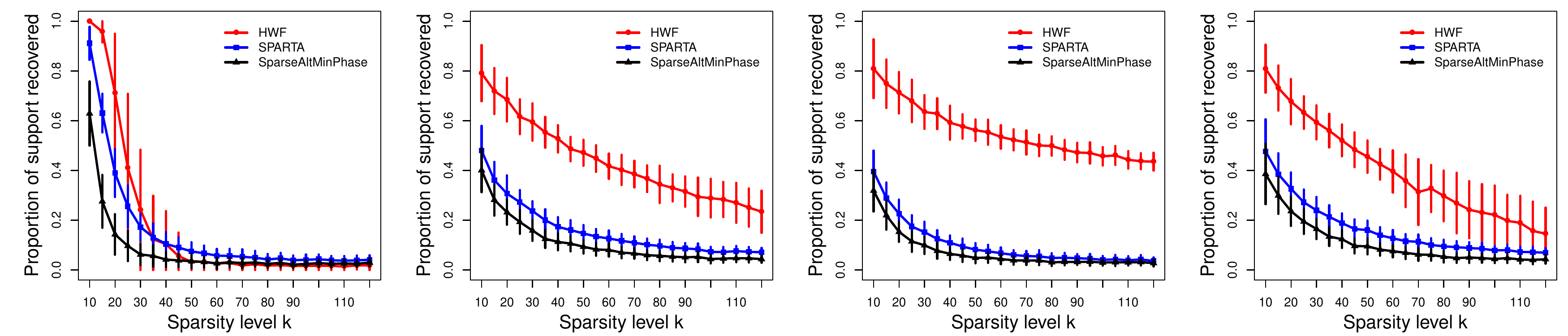}}
\caption{Proportion of support variables $|\hat{\mathcal{S}}\cap \mathcal{S}|/|\mathcal{S}|$ correctly recovered plus/minus one standard deviation (vertical lines) by our method (red curve), SPARTA (blue curve) and SparseAltMinPhase (black curve), for different levels of $x^*_{max}$. From left to right: $(i)$ $x^*_{max} = \frac{1}{\sqrt{k}}$, $(ii)$ $x^*_{max} = k^{-0.25}$, $(iii)$ $x^*_{max} = 0.7$, $(iv)$ Gaussian signal $\mathbf{x}^*$ (without restrictions on $x^*_{max}$).} 
\label{fig:support}
\end{center}
\vskip -0.4in
\end{figure}
Figure \ref{fig:support} confirms the predictions of Lemma \ref{lemma:suprec}. In the first case, where the signal only takes the values $x^*_i\in \{-\frac{1}{\sqrt{k}}, 0, \frac{1}{\sqrt{k}}\}$, our theoretical bound reads $\O(k^2\log n)$ (as we are considering a regime where $k\ge \O(\log^2n)$), and we expect the support recovery performance of our method to be comparable to the other methods. For sparsity levels $k\ge 45$, our method is slightly worse than the other two, because it sometimes fails to identify a coordinate $i$ with $x^*_i\neq 0$ in the first step. This case is of little practical relevance, since if only such a small portion of the support is recovered, neither of the three algorithms is able to reconstruct $\mathbf{x}^*$. As we increase $x^*_{max}$, the performance of our method improves substantially, while the support recovery methods used in SPARTA and SparseAltMinPhase do not show any improvement (in fact, they get slightly worse, which can be attributed to the fact that, as we increase $x^*_{max}$, the other coordinates become smaller since we keep $\|\mathbf{x}^*\|_2=1$ fixed). Our method also shows better support recovery performance for Gaussian signals $\mathbf{x}^*$, where we do not fix $x^*_{max}$ (bottom-right figure). 
\section{Parameter estimation}
\label{section:paramterestimation}
In this section, we first demonstrate that our support recovery method, one step of HWF, can be combined with existing algorithms (such as SPARTA), which leads to a final procedure which provably recovers a $k$-sparse signal from $\O(\max\{k\log n,\, \log^3n\}(x^*_{max})^{-2})$ measurements; this is summarized in Algorithm \ref{alg:spartasup}.
\vskip -0.05in
\begin{algorithm}[h]
   \caption{SPARTA-support, multiple restarts}
   \label{alg:spartasup}
\begin{algorithmic}
   \STATE {\bfseries Input:} observations $\{Y_j\}_{j=1}^m$, measurement vectors $\{\mathbf{A}_j\}_{j=1}^m$, sparsity level $k$, step size $\eta$, iterations $\bar{t}$, initialization size $\alpha$, number of restarts $\bar{b}$, parameters for SPARTA specified \\ in \citep{WZGAC18}
	 \STATE \vspace{-3mm}
   \FOR{$b=1$ {\bfseries to} $\bar{b}$}
	 \STATE Set $I_b$ to the $b^{th}$ largest instance in $\{R_i\}_{i=1}^n$
	 \STATE \vspace{-0mm} Set $\mathbf{U}^0 = \mathbf{V}^0 = \alpha \Eins_n$,\hspace{1mm} $U^0_{I_b} = \Big(\frac{\hat{\theta}}{\sqrt{3}} + \alpha^2\Big)^{1/2}$
	 \STATE Run one step of HWF (\ref{eq:hwf}) for $\mathbf{X}^{1, b}$
	\STATE Set $\hat{\mathcal{S}}_b$ to the $k$ largest coordinates of $|\mathbf{X}^{1,b}|$
	\STATE Run $\bar{t}$ iterations of SPARTA using $\hat{\mathcal{S}}_b$ for $\mathbf{X}^{\bar{t},b}$
	
   \ENDFOR
	 \STATE Set $B_{min}$ to be the index minimizing
	$\|\nabla F(\mathbf{X}^{\bar{t},b})\|_2$
	 \STATE {\bfseries Return:} $\mathbf{X}^{\bar{t}, B_{min}}$
\end{algorithmic}
\end{algorithm}

Note that we can allow multiple restarts in this case as well ($\bar{b}>1$) in order to further improve the probability of obtaining a good initialization. Since SPARTA only produces $k$-sparse solutions due to the thresholding step, we choose the final solution by selecting the one which produces the smallest gradient $\|\nabla F(\mathbf{X}^{\bar{t},b})\|_2$.

Our analysis from the previous section immediately leads to the following result.
\begin{theorem}\label{thm} Let $\mathbf{x}^*\in \R^n$ be any $k$-sparse vector with $x^*_{min} = \Omega(1/\sqrt{k})$, and assume that we are given measurements $\{Y_j = (\mathbf{A}_j^T\mathbf{x}^*)^2\}_{j=1}^m$, where \mbox{$\mathbf{A}_j\sim\mathcal{N}(0,\mathbf{I}_n)$}, $j= 1,\dots, m$, are i.i.d.\ Gaussian vectors. If \mbox{$m\ge \O(\max\{k\log n,\log^3n\}(x^*_{max})^{-2})$}, then, with the parameters specified in \citep{WZGAC18}, successive estimates of SPARTA-support satisfy, with probability at least $1 - \O(m^{-1} + n^{-10})$ and for a universal constant $0<v<1$,
\begin{equation*}
\operatorname{dist}(\mathbf{X}^t, \mathbf{x}^*) \le \frac{1}{10}(1-\nu)^t \|\mathbf{x}^*\|_2, \qquad t\ge 0.
\end{equation*}
\end{theorem}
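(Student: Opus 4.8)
The plan is to derive Theorem~\ref{thm} as a direct consequence of Lemma~\ref{lemma:suprec} combined with the convergence guarantee established for SPARTA in \citep{WZGAC18}, joined by a union bound over the two sources of randomness. Both the support-recovery step (one step of HWF) and the subsequent SPARTA iterations are measurable functions of the same i.i.d.\ Gaussian measurements $\{(\A_j, Y_j)\}_{j=1}^m$, so it suffices to bound the probability of each stage failing and add the two bounds.

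First I would invoke Lemma~\ref{lemma:suprec}: under the hypotheses $x^*_{min}=\Omega(1/\sqrt{k})$ and $m\ge \O(\max\{k\log n,\log^3 n\}(x^*_{max})^{-2})$, the $k$ largest coordinates of $|\mathbf{X}^{1}|$ coincide with the true support $\mathcal{S}$ on an event $E_1$ with $\P(E_1^c)\le \O(n^{-10})$. On $E_1$ the estimated support $\hat{\mathcal{S}}$ handed to SPARTA equals $\mathcal{S}$, so from the second stage onward the problem is effectively the $k$-dimensional phase retrieval problem supported on $\mathcal{S}$, with restricted measurement vectors $\A_{j,\mathcal{S}}\sim\gauss(0,\mathbf{I}_k)$ still i.i.d.\ Gaussian.

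Second I would verify that the sample complexity assumed here implies the requirement of the SPARTA analysis. Since $\|\x^*\|_2=1$ forces $x^*_{max}=\max_i|x^*_i|/\|\x^*\|_2\le 1$, we have $(x^*_{max})^{-2}\ge 1$ and hence $m\ge \O(\max\{k\log n,\log^3n\}(x^*_{max})^{-2})\ge \O(k\log n)$, which is exactly the sample size under which \citep{WZGAC18} prove that SPARTA, initialized by its orthogonality-promoting scheme on the correct support and run with the prescribed step size, satisfies $\operatorname{dist}(\mathbf{X}^t,\x^*)\le \frac{1}{10}(1-\nu)^t\|\x^*\|_2$ for some universal $0<\nu<1$ and all $t\ge 0$. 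Call this event $E_2$; their result gives $\P(E_2^c)\le \O(m^{-1})$. A union bound then yields $\P(E_1\cap E_2)\ge 1-\O(m^{-1}+n^{-10})$, and on $E_1\cap E_2$ both conclusions hold at once — the support is identified correctly and the SPARTA iterates converge linearly — which is the stated bound. For $\bar{b}>1$ restarts the selection rule returns the iterate minimizing $\|\nabla F(\cdot)\|_2$, which can only lower the failure probability, so the single-run bound already suffices.

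The main obstacle is conceptual rather than computational: because support recovery and the SPARTA stage share the same measurements, $E_1$ and $E_2$ are dependent, and one must resist multiplying the two success probabilities. The resolution is that a union bound on the complements requires no independence; the only point to check carefully is that the SPARTA guarantee of \citep{WZGAC18} is phrased as an event on the measurement probability space for the fixed, deterministic support $\mathcal{S}$ (a property of $\x^*$), rather than conditionally on a data-dependent support, so that intersecting it with $E_1$ — on which $\hat{\mathcal{S}}=\mathcal{S}$ — is legitimate and no sample splitting between the two stages is needed.
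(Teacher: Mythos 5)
Your proposal is correct and follows essentially the same route as the paper: invoke Lemma \ref{lemma:suprec} to get exact support recovery with probability $1-\O(n^{-10})$, then apply the SPARTA guarantees of \citep{WZGAC18} (the paper cites their Lemmas 2 and 3) on the correctly identified support, combining the failure probabilities by a union bound. Your additional care about the dependence of the two events, the check that $(x^*_{max})^{-2}\ge 1$ implies the $\O(k\log n)$ sample requirement of SPARTA, and the remark on restarts are sound elaborations of details the paper leaves implicit.
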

\begin{proof}
By Lemma \ref{lemma:suprec}, one step of HWF recovers the true support with probability $1-\O(n^{-10})$. The result then follows from Lemma 2 and 3 of \citep{WZGAC18}.
\end{proof}
Compared to Theorem 1 of \citep{WZGAC18}, this result reduces the sample complexity from $\O(k^2\log n)$ to $\O(k(x^*_{max})^{-2}\log n)$, provided $k\ge \O(\log^2n)$. The assumption $x^*_{min}=\Omega(1/\sqrt{k})$ is likely an artifact of the proof method of \citep{WZGAC18} and not necessary. Intuitively, identifying the full support is not necessary, as a good initialization can also be obtained if only small coordinates with $x^*_i\le \O(1/\sqrt{k})$ are missed. This intuition has been made rigorous for an alternative spectral initialization \citep{JH17}. However, the orthogonality-promoting initialization used in SPARTA has been experimentally found to produce an initial estimate closer to the signal $\x^*$ than the spectral initialization \citep{WGE17, ZWGC18}.

One step of Algorithm \ref{alg:wfrmr} requires $\O(nm)$ operations, and $\bar{t}=\O(\log(1/\epsilon))$ SPARTA iterations are sufficient to find an $\epsilon$-accurate solution, so SPARTA-support incurs a total computational cost of $\O(nm\log(1/\epsilon))$. This is proportional to the cost of reading the data modulo logarithmic terms. 

However, SPARTA-support enforces sparsity of the estimates $\mathbf{X}^t$ explicitly via a hard-thresholding step, which requires knowledge of $k$ (or an upper bound). Our simulations show that HWF \textit{adapts} to the signal sparsity $k$: we neither need knowledge of $k$ for thresholding steps, nor do we need to add a penalty term to the objective and tune regularization parameters to promote sparsity. Given enough samples, our algorithm automatically converges to the $k$-sparse signal $\mathbf{x}^*$.

In the following, we present simulations evaluating the reconstruction performance of HWF and SPARTA-support relative to state-of-the-art methods for sparse phase retrieval. In particular, we will consider SPARTA, SWF and PR-GAMP.
\begin{remark}[Comparison with PR-GAMP]
Our numerical experiments show comparable sample complexities for PR-GAMP and HWF, with both being lower than the sample requirement of other gradient-based methods. PR-GAMP has been empirically shown to achieve linear sample complexity in some regimes with Gaussian signals \citep{SR15}. However, PR-GAMP relies on the implementation and tuning of several algorithmic principles, such as damping, normalization, and expectation-maximization (EM) steps. On the one hand, the application of these algorithmic principles makes PR-GAMP difficult to analyze, as rigorous theoretical investigations are known to be challenging even for much simpler AMP-based algorithms \citep{BM11}. On the other hand, running PR-GAMP requires tuning of several parameters, including the sparsity rate $k/n$ via EM steps, and it requires choosing the prior distribution for the signal $\mathbf{x}^*$. For our simulations we used the freely available GAMP package\footnote{For PR-GAMP we used the code available from \url{https://sourceforge.net/projects/gampmatlab/}} that does automatic parameter tuning, using the Gauss-Bernoulli prior. HWF is a much simpler algorithm, as it is just vanilla gradient descent applied to the unregularized empirical risk with Hadamard parametrization. HWF does not rely on algorithmic principles to promote convergence to good solutions, and it is empirically seen to adapt to the sparsity level $k$. We leave it to future work to give a full theoretical account on the convergence guarantees of HWF and to consider more refined and fine-tuned formulations of HWF that can combine algorithmic principles typically used in the literature on sparsity (cf.\ Section \ref{section:conclusion}, Conclusion).
\end{remark}
In experiments where we do not fix $x^*_{max}$, the true signal vector $\mathbf{x}^*\in \R^{1000}$ was obtained by sampling $\mathbf{x}^*\sim \mathcal{N}(0,\mathbf{I}_{1000})$, setting $(1000-k)$ random entries of $\mathbf{x}^*$ to $0$ and normalizing $\|\mathbf{x}^*\|_2=1$. Otherwise, $\mathbf{x}^*$ is generated as described in Section \ref{section:supportrecovery}. We obtain $m$ noiseless measurements $Y_j = (\mathbf{A}_j^T\mathbf{x}^*)^2$ with $\mathbf{A}_j\sim\mathcal{N}(0,\mathbf{I}_{1000})$ i.i.d..

For the parameters of SPARTA and SWF, we found the values suggested in the original papers to work best in our simulations and used these in all experiments. For HWF we found that a constant step size $\eta = 0.1$ works well (similar to WF \citep{MWCC18}). For the other parameters, any small values work well without much difference and we set $\alpha = 0.001$, $\kappa = 0.05$ and allow $\bar{b}=50$ restarts. We run all algorithms for a maximum of $\bar{t}=100,000$ iterations or until $F(\mathbf{X}^t)\le 10^{-7}$, and declare it a success if the relative error 
\begin{equation*}
\frac{\operatorname{dist}(\mathbf{X}^{\bar{t}}, \mathbf{x}^*)}{\|\mathbf{x}^*\|_2}
\end{equation*}
is less than $0.01$. We evaluate the empirical success rate obtained from 100 independent Monte Carlo trials.
In all experiments, SPARTA, SWF and SPARTA-support were run with oracle knowledge of the true signal sparsity $k$, which is not needed for HWF. 

In the first experiment, we fix the sparsity to $k=20$ and vary $m$ from $100$ to $1000$. Figure \ref{fig:k20m500}  (left) shows that HWF is able to reconstruct the signal reliably (with $95\%$ success rate) from $m=400$ measurements, which is slightly better than PR-GAMP ($m=500$), while SPARTA and SWF both require almost twice as many observations ($m=700$).
Next, we fix $m=500$ and vary the sparsity level $k$. Figure \ref{fig:k20m500} (right) shows that HWF achieves a reconstruction rate of $95\%$ for signals with up to $35$ non-zero entries, while the PR-GAMP achieves this success rate only for signals with up to $25$ non-zero entries. PR-GAMP achieves slightly higher success rates than HWF for sparsity levels where neither algorithm is able to reliably reconstruct the signal.

\begin{figure}[ht]
\begin{center}
\centerline{\includegraphics[width=0.66\columnwidth]{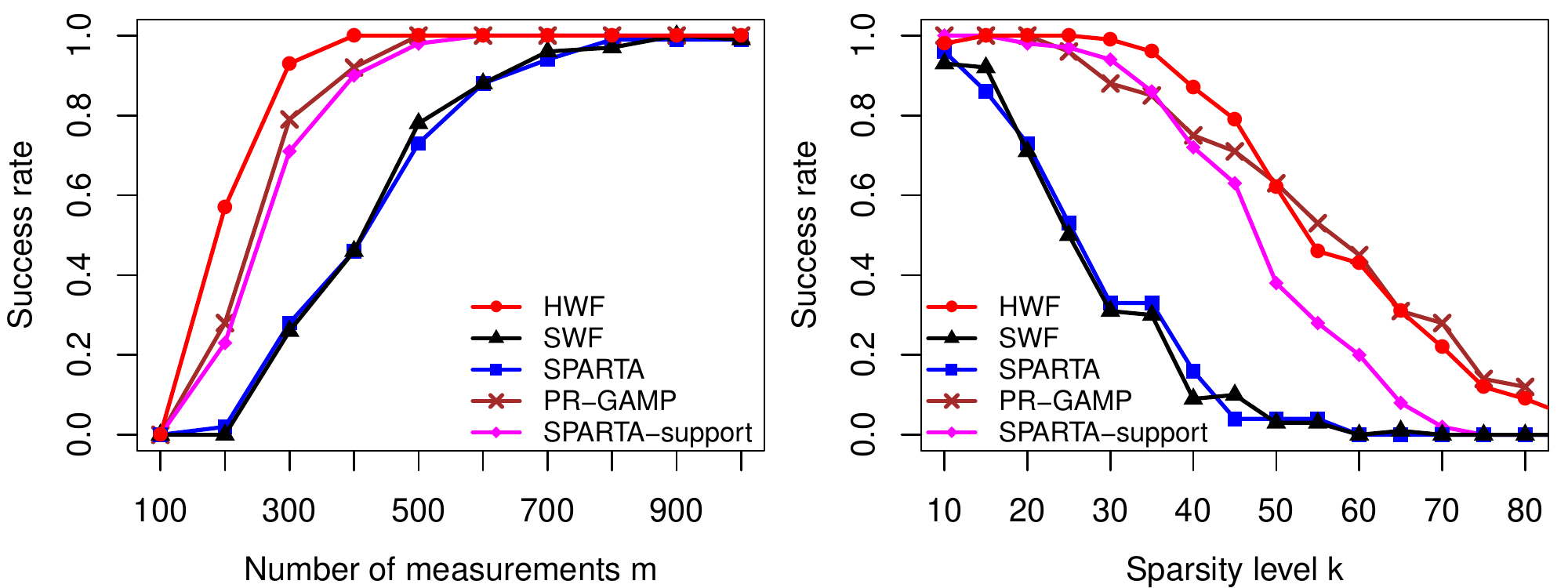}}
\caption{Empirical success rate for $n=1000$ fixed against number of measurements $m$ with sparsity level $k=20$ fixed (left) and against sparsity level $k$ with $m=500$ measurements (right).}
\label{fig:k20m500}
\end{center}
\vskip -0.3in
\end{figure}
In the previous section, we discussed the superior support recovery performance of our method as $x^*_{max}$ increases. The next experiment examines whether this effect also translates into better reconstruction performance. To this end, we consider signals generated as in the experiments in Section \ref{section:supportrecovery}, fix $m=500$ and vary $k\in[10,80]$. Figure \ref{fig:xmax} shows that, even when the signal only takes values $x^*_i\in \big\{-\frac{1}{\sqrt{k}}, 0, \frac{1}{\sqrt{k}}\big\}$, HWF achieves higher success rates than SPARTA and SWF. SPARTA-support is comparable to them, as also the support recovery performance is similar for this $\mathbf{x}^*$, and SPARTA-support subsequently applies the same steps as SPARTA. As $x^*_{max}$ increases, the reconstruction performance of our methods improves, with HWF maintaining a higher success rate than SPARTA-support. As before, PR-GAMP achieves a $95\%$ success rate up to slightly lower sparsity levels than HWF. PR-GAMP maintains success rates comparable to HWF as $x^*_{max}$ increases, which might explain the linear sample complexity observed in \citep{SR15} in some regimes for Gaussian signals. The maximum component of a Gaussian vector scales (in expectation) like $\sqrt{\log k}/\sqrt{k}$, which is, if $k$ is not very large, noticeably larger than $1/\sqrt{k}$. Comparing the right plot of Figure \ref{fig:k20m500} and the left plot of Figure \ref{fig:xmax}, we see that PR-GAMP achieves higher success rates for Gaussian signals than for the signal with $x^*_{max} = 1/\sqrt{k}$.

\begin{figure}[ht]
\begin{center}
\centerline{\includegraphics[width=\columnwidth]{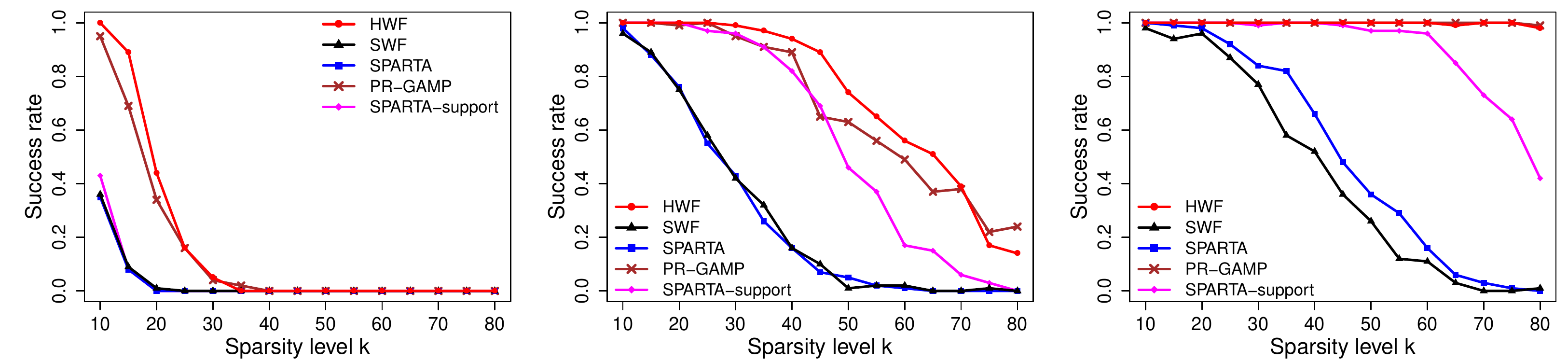}}
\caption{Empirical success rate against sparsity level $k$ with $n=1000, m=500$ fixed. From left to right: $(i)$ $x^*_{max} = 1/\sqrt{k}$, $(ii)$ $x^*_{max} = k^{-0.25}$ and $(iii)$ $x^*_{max} = 0.7$.} 
\label{fig:xmax}
\end{center}
\vskip -0.3in
\end{figure}
Next, we examine how the sample complexity of HWF scales with the signal sparsity $k$. The success rate vs signal sparsity $k$ and number of measurements $m$ is shown in Figure \ref{fig:heatmap}, which suggests that the sample complexity scales as $\O(k(x^*_{max})^{-2}\log\frac{n}{k})$, where we obtain $x^*_{max}$ as the average maximum coordinate of $100,000$ Gaussian $k$-sparse signals. We note that this scaling appears almost linear.

\begin{figure}[ht]
\begin{center}
\centerline{\includegraphics[width=0.6\columnwidth]{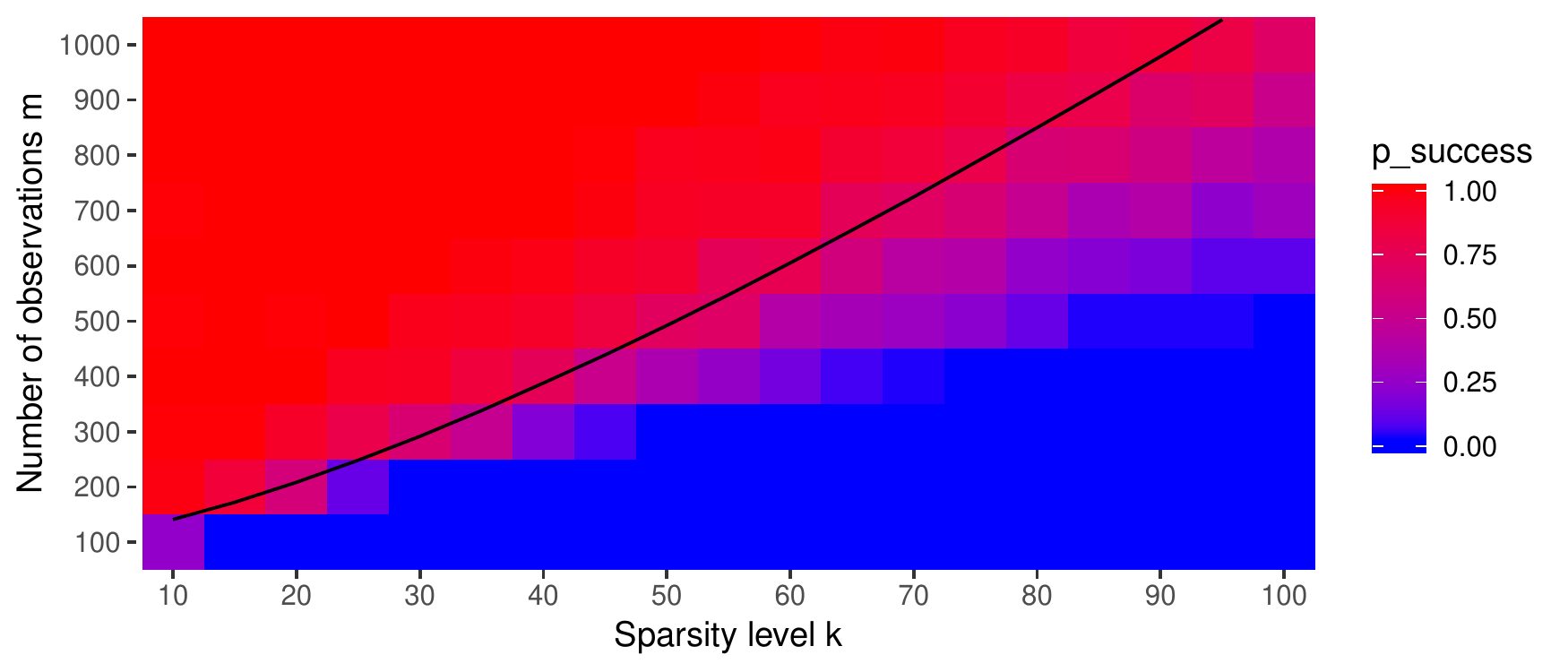}}
\caption{Empirical success rate (red: high, blue: low) of HWF against sparsity level $k$ and number of observations $m$, with $n=1000$ fixed. Black line: $m=\frac{1}{3}k(x^*_{max})^{-2}\log \frac{n}{k}$.}
\label{fig:heatmap}
\end{center}
\vskip -0.3in
\end{figure}
One of the parameters in Algorithm \ref{alg:wfrmr} is the number of restarts $\bar{b}$. Increasing $\bar{b}$ also increases the probability of HWF finding the true signal, but this comes at the cost of an increase in computational time. For the next experiment, we run HWF in the same setting as the first two experiments and vary the number of allowed restarts $\bar{b}$ from $1$ to $100$. Figure \ref{fig:numtrial} shows that increasing the number of allowed restarts indeed increases the probability of successful reconstruction, where the success rate barely increases further as we increase the number of restarts $\bar{b}$ beyond $50$.

\begin{figure}[ht]
\begin{center}
\centerline{\includegraphics[width=0.66\columnwidth]{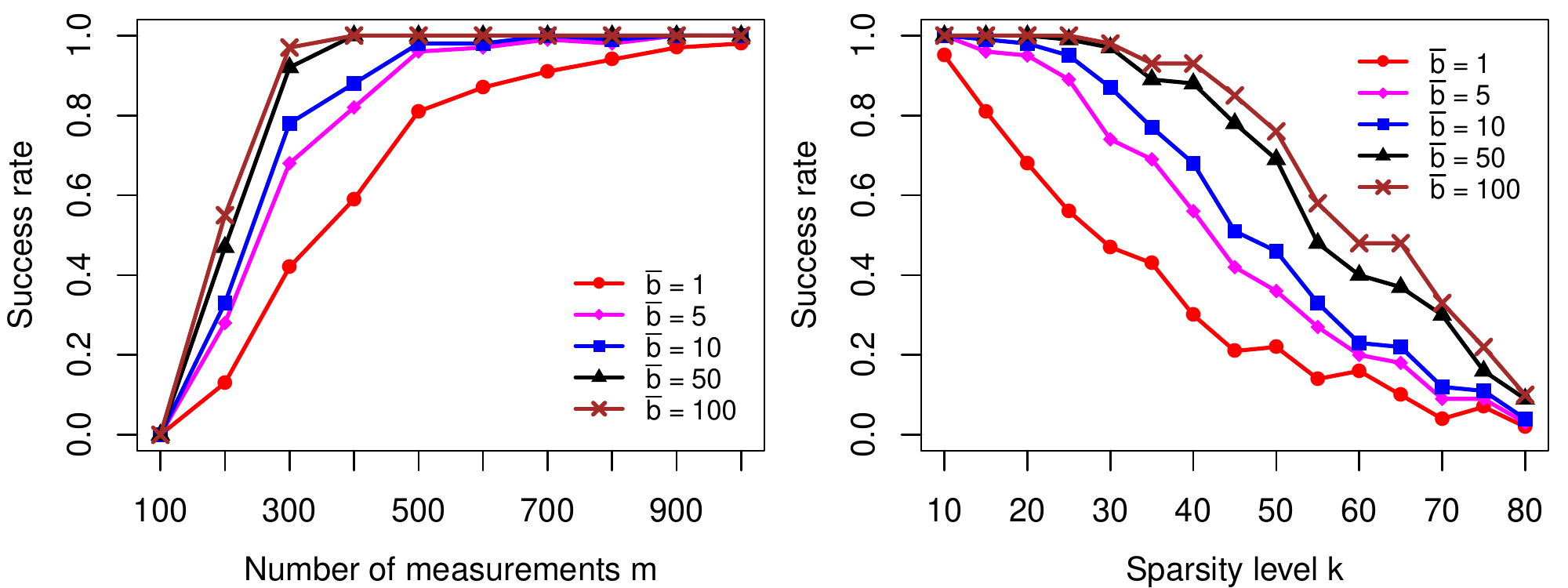}}
\caption{Empirical success rate for $n=1000$ fixed against number of measurements $m$ with sparsity level $k=20$ fixed (left), and against sparsity level $k$ with $m=500$ measurements (right), for varying number of restarts $\bar{b}\in [1,100]$.} 
\label{fig:numtrial}
\end{center}
\vskip -0.4in
\end{figure}
Next, we examine the convergence behavior of HWF. We also test HWF in the complex-valued setting, where we generate vectors $\mathbf{x}^*, \mathbf{A}_j\sim\mathcal{N}(0,\frac{1}{2}\mathbf{I}_{1000}) + i\mathcal{N}(0, \frac{1}{2}\mathbf{I}_{1000})$, set $990$ random entries of $\mathbf{x}^*$ to zero, normalize $\|\mathbf{x}^*\|_2=1$ and generate $m=500$ measurements $Y_j = |\mathbf{A}_j^\mathrm{H}\x^*|^2$. Figure \ref{fig:conv_complex} shows that HWF is also able to reconstruct complex signals. While HWF converges faster in the real case, both cases exhibit sublinear convergence after a short "warm-up" period. 
This can be explained by our parametrization. Consider the gradient $\nabla_{\mathbf{u}} F(\mathbf{U}^t,\mathbf{V}^t) = 2\nabla F(\mathbf{X}^t)\odot \mathbf{U}^t$: as the initialization size $\alpha$ is small, the gradient is small in the beginning due to the term $\mathbf{U}^t$. As $\mathbf{X}^t$ approaches $\mathbf{x}^*$, the term $\nabla F(\mathbf{X}^t)$ converges to zero, which leads to linear convergence with a constant stepsize in the case of WF \citep{MWCC18}. With our parametrization, $\nabla_{\mathbf{u}} F$ (or $\nabla_{\mathbf{v}} F$) converges to zero faster than $\nabla_{\mathbf{x}} F$, as we typically have $U^t_i\rightarrow 0$ or $V^t_i\rightarrow 0$ (or both, if $x^*_i=0$), leading to sublinear convergence.

\begin{figure}[ht]
\begin{center}
\centerline{\includegraphics[width=0.7\columnwidth]{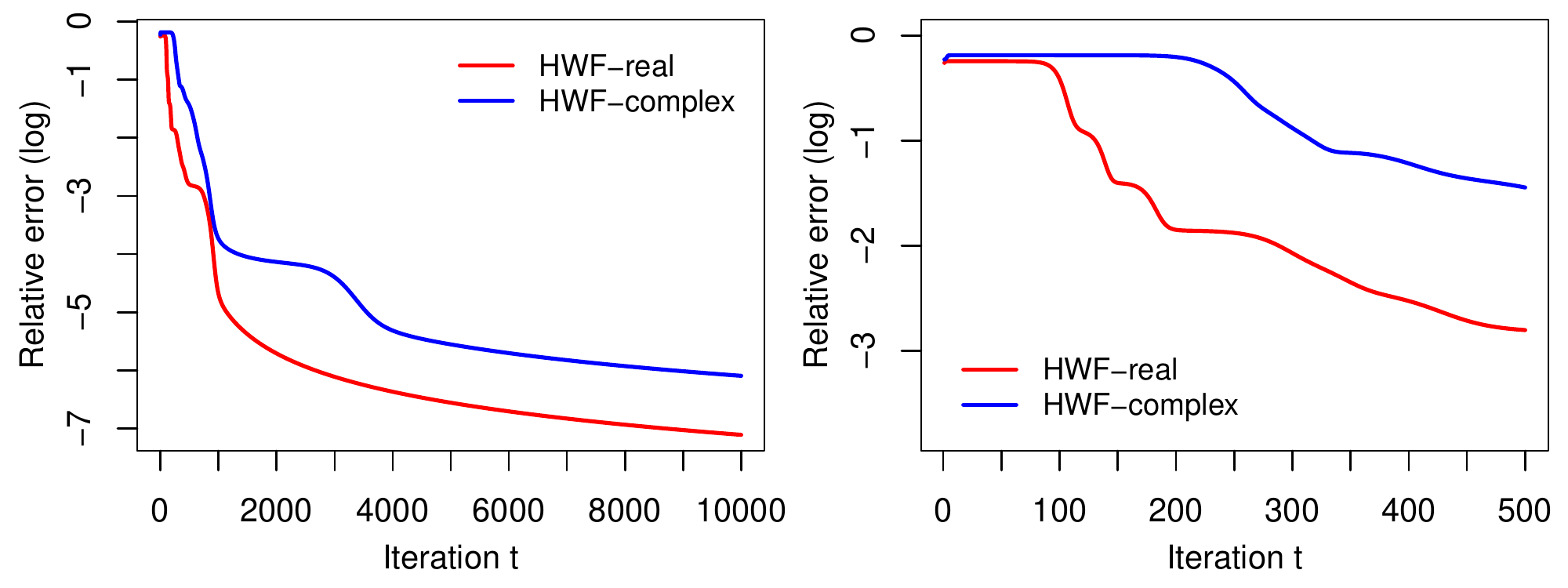}}
\caption{Relative error (log-scale) of HWF for real/complex signals with $n = 1000$, $m=500$ and $k=10$ for 10,000 iterations (left) and zoom-in to 500 iterations (right).} 
\label{fig:conv_complex}
\end{center}
\vskip -0.3in
\end{figure}
Finally, we present a numerical experiment that considers random initialization. In particular, we initialize $U^0_i, V^0_i$ to small Gaussian noise $\gauss (0, 0.01^2)$ for all $i = 1,\dots, n$. In general, we find that more samples are required for HWF to successfully reconstruct the signal $\x^*$ starting from a random initialization. Figure \ref{fig:randini} shows that even in a setting with $n=1000$, $m=700$ and $k=10$, where HWF with random initialization does converge to the signal $\x^*$, the $\ell_2$ error $\operatorname{dist}(\mathbf{X}^t, \x^*)$ only decreases after an initial plateau, leading to slower convergence. This is in line with the intuition provided in the appendix, namely that $(i)$ the signal can still be recovered as coordinates on the support of $\x^*$ increase at a faster rate than coordinates not on the support, while $(ii)$ all coordinates only change at a very slow rate initially, because the inner product $(\mathbf{X}^0)^T\x^*$ is closer to zero with random initialization compared to our proposed initialization (\ref{eq:initialization}), which leads to the initial plateau; see the appendix for more details. 
\begin{figure}[ht]
\begin{center}
\centerline{\includegraphics[width=0.4\columnwidth]{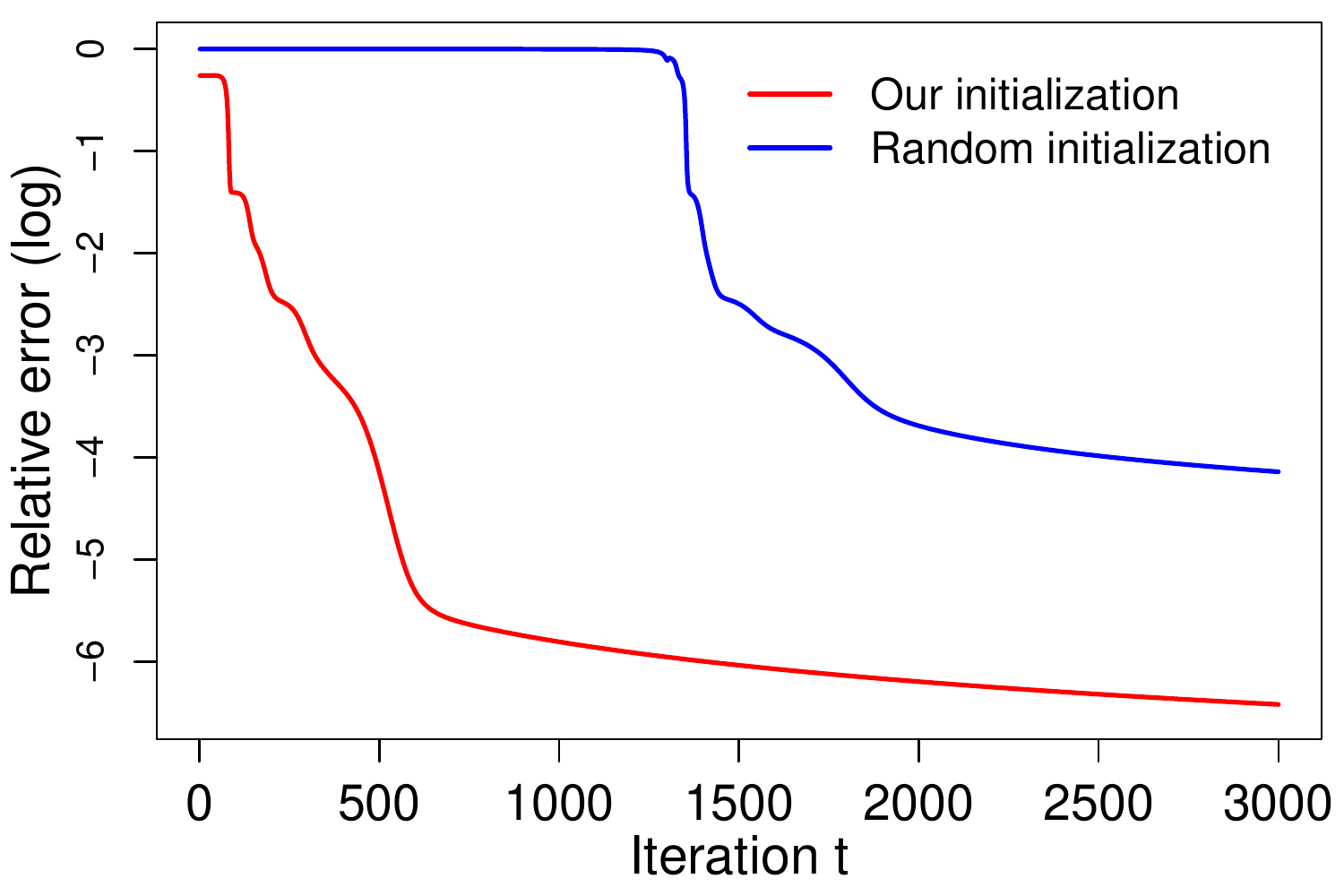}}
\caption{Relative error (log-scale) of HWF with random initialization (blue) and our proposed initialization (\ref{eq:initialization}) (red) with $n=1000$, $m=700$ and $k=10$.} 
\label{fig:randini}
\end{center}
\vskip -0.3in
\end{figure}

\section{Conclusion}
\label{section:conclusion}
In this paper, we proposed HWF, which is a simple algorithm for sparse phase retrieval. We proved that one step of HWF can be used as a support recovery tool, which, combined with existing algorithms such as SPARTA, yields a computationally fast algorithm with improved sample complexity, which reads $\O(k\log n)$ if the signal contains at least one large component and $k\ge \O(\log^2n)$. We have shown in numerical experiments that the sample complexity of HWF is lower than that of existing gradient based methods such as SPARTA and SWF, and comparable to PR-GAMP, which has been empirically shown to achieve linear sample complexity for Gaussian signals in some regimes \citep{SR15}. While HWF does not require knowledge of the signal sparsity $k$, thresholding steps or any added regularization terms, this simplicity seems to come at the price of sublinear convergence and thus increased computational cost.
We leave it to future work to investigate whether algorithmic principles such as the increasing step-size scheme considered in \citep{VKR19} or thresholding steps previously considered in the literature on sparse phase retrieval (e.g.\ \citep{CLM16, WZGAC18, ZWGC18}) can be used to accelerate the convergence speed of HWF or to further improve its sample complexity beyond the level of the empirical results observed for PR-GAMP, which already relies on a combination of many such algorithmic principles (damping, normalization, EM steps). Compared to PR-GAMP, the simplicity of HWF makes the algorithm potentially more amenable to a rigorous theoretical investigation that can support the high-level analysis presented in our work.

\bibliography{references}

\begin{thebibliography}{43}
\providecommand{\natexlab}[1]{#1}
\providecommand{\url}[1]{\texttt{#1}}
\expandafter\ifx\csname urlstyle\endcsname\relax
  \providecommand{\doi}[1]{doi: #1}\else
  \providecommand{\doi}{doi: \begingroup \urlstyle{rm}\Url}\fi

\bibitem[Arora et~al.(2019)Arora, Cohen, Hu, and Luo]{ACHL19}
S.~Arora, N.~Cohen, W.~Hu, and Y.~Luo.
\newblock Implicit regularization in deep matrix factorization.
\newblock In \emph{Advances in Neural Information Processing Systems}, pages
  7411--7422, 2019.

\bibitem[Balan et~al.(2006)Balan, Casazza, and Edidin]{BCE06}
R.~Balan, P.~Casazza, and D.~Edidin.
\newblock On signal reconstruction without phase.
\newblock \emph{Computational Harmonic Analysis}, 20\penalty0 (3):\penalty0
  345--356, 2006.

\bibitem[Bayati and Montanari(2011)]{BM11}
M.~Bayati and A.~Montanari.
\newblock The dynamics of message passing on dense graphs, with applications to
  compressed sensing.
\newblock \emph{IEEE Transactions on Information Theory}, 57\penalty0
  (2):\penalty0 764--785, 2011.

\bibitem[Bunk et~al.(2007)Bunk, Diaz, Pfeiffer, David, Schmitt, Satapathy, and
  Veen]{BDPFSSV07}
O.~Bunk, A.~Diaz, F.~Pfeiffer, C.~David, B.~Schmitt, D.~K. Satapathy, and J.~F.
  Veen.
\newblock Diffractive imaging for periodic samples: Retrieving one-dimensional
  concenctration profiles across microfluidic channels.
\newblock \emph{Acta Crystallographica Section A: Foundations of
  Crystallography}, 63\penalty0 (4):\penalty0 306--314, 2007.

\bibitem[Cai et~al.(2016)Cai, Li, and Ma]{CLM16}
T.~Cai, X.~Li, and Z.~Ma.
\newblock Optimal rates of convergence for noisy sparse phase retrieval via
  thresholded {W}irtinger flow.
\newblock \emph{Annals of Statistics}, 44\penalty0 (5):\penalty0 2221--2251,
  2016.

\bibitem[Cand\`{e}s and Li(2012)]{CL12}
E.~J. Cand\`{e}s and X.~Li.
\newblock Solving quadratic equations via {P}hase{L}ift when there are about as
  many equations as unknowns.
\newblock \emph{Foundations of Computational Mathematics}, 14\penalty0
  (5):\penalty0 1017--1026, 2012.

\bibitem[Cand\`{e}s et~al.(2013)Cand\`{e}s, Strohmer, and Voroninski]{CSV13}
E.~J. Cand\`{e}s, T.~Strohmer, and V.~Voroninski.
\newblock Phase{L}ift: Exact and stable signal recovery from magnitude
  measurements via convex programming.
\newblock \emph{Communications on Pure and Applied Mathematics}, 66\penalty0
  (8):\penalty0 1241--1274, 2013.

\bibitem[Cand\`{e}s et~al.(2015)Cand\`{e}s, Li, and Soltanolkotabi]{CLS15}
E.~J. Cand\`{e}s, X.~Li, and M.~Soltanolkotabi.
\newblock Phase retrieval via {W}irtinger flow: Theory and algorithms.
\newblock \emph{IEEE Transactions on Information Theory}, 61\penalty0
  (4):\penalty0 1985--2007, 2015.

\bibitem[Chen and Cand\`{e}s(2015)]{CC15}
Y.~Chen and E.~J. Cand\`{e}s.
\newblock Solving random quadratic systems of equations is nearly as easy as
  solving linear systems.
\newblock In \emph{Advances in Neural Information Processing Systems}, pages
  739--747, 2015.

\bibitem[Chen et~al.(2019)Chen, Chi, Fan, and Ma]{CCFM19}
Y.~Chen, Y.~Chi, J.~Fan, and C.~Ma.
\newblock Gradient descent with random initialization: Fast global convergence
  for nonconvex phase retrieval.
\newblock \emph{Mathematical Programming}, 176\penalty0 (1--2):\penalty0 5--37,
  2019.

\bibitem[Chung and Lu(2006)]{CL06}
F.~Chung and L.~Lu.
\newblock Concentration inequalities and martingale inequalities: a survey.
\newblock \emph{Internet Math.}, 3\penalty0 (1):\penalty0 79--127, 2006.

\bibitem[Eldar and Mendelson(2014)]{EM14}
Y.~C. Eldar and S.~Mendelson.
\newblock Phase retrieval: Stability and recovery guarantees.
\newblock \emph{Applied and Computational Harmonic Analysis}, 36\penalty0
  (3):\penalty0 473--494, 2014.

\bibitem[Fienup(1982)]{F82}
J.~R. Fienup.
\newblock Phase retrieval algorithms: A comparison.
\newblock \emph{Applied Optics}, 21\penalty0 (15):\penalty0 2758--2769, 1982.

\bibitem[Gerchberg and Saxton(1972)]{GS72}
R.~W. Gerchberg and W.~O. Saxton.
\newblock A practical algorithm for the determination of phase from image and
  diffraction.
\newblock \emph{Optik}, 35:\penalty0 237--246, 1972.

\bibitem[Goldstein and Studer(2018)]{GS18}
T.~Goldstein and C.~Studer.
\newblock Phase{M}ax: Convex phase retrieval via basis pursuit.
\newblock \emph{IEEE Transactions on Information Theory}, 64\penalty0
  (4):\penalty0 2675--2689, 2018.

\bibitem[Gunasekar et~al.(2017)Gunasekar, Woodworth, Bhojanapalli, Neyshabur,
  and Srebro]{GWBNS17}
S.~Gunasekar, B.~E. Woodworth, S.~Bhojanapalli, B.~Neyshabur, and N.~Srebro.
\newblock Implicit regularization in matrix factorization.
\newblock In \emph{Advances in Neural Information Processing Systems}, pages
  6151--6159, 2017.

\bibitem[Hand and Voroninski(2016)]{HV16}
P.~Hand and V.~Voroninski.
\newblock Compressed sensing from phaseless {G}aussian measurements via linear
  programming in the natural parameter spaces.
\newblock \emph{arXiv preprint arXiv:1611.05985}, 2016.

\bibitem[Hoff(2017)]{H17}
P.~D. Hoff.
\newblock Lasso, fractional norm and structured sparse estimation using a
  {H}adamard product parametrization.
\newblock \emph{Computational Statistics \& Data Analysis}, 115:\penalty0
  186--198, 2017.

\bibitem[Jaganathan et~al.(2013)Jaganathan, Oymak, and Hassibi]{JOH13}
K.~Jaganathan, S.~Oymak, and B.~Hassibi.
\newblock Sparse phase retrieval: Convex algorithms and limitations.
\newblock In \emph{Proceedings of IEEE International Symposium on Information
  Theory}, pages 1022--1026, 2013.

\bibitem[Jaganathan et~al.(2016)Jaganathan, Eldar, and Hassibi]{JEH16}
K.~Jaganathan, Y.~C. Eldar, and B.~Hassibi.
\newblock Phase retrieval: An overview of recent developments.
\newblock In A.~Stern, editor, \emph{Optical Compressive Imaging}, chapter~13,
  pages 263--296. Taylor Francis Group, Boca Raton, FL, 2016.

\bibitem[Jagatap and Hedge(2017)]{JH17}
G.~Jagatap and C.~Hedge.
\newblock Fast, sample-efficient algorithms for structured phase retrieval.
\newblock In \emph{Advances in Neural Information Processing Systems}, pages
  4917--4927, 2017.

\bibitem[Jagatap and Hedge(2019)]{JH19}
G.~Jagatap and C.~Hedge.
\newblock Sample-efficient algorithms for recovering structured signals from
  magnitude-only measurements.
\newblock \emph{IEEE Transactions on Information Theory}, 65\penalty0
  (7):\penalty0 4434--4456, 2019.

\bibitem[Li and Voroninski(2013)]{LV13}
X.~Li and V.~Voroninski.
\newblock Sparse signal recovery from quadratic measurements via convex
  programming.
\newblock \emph{SIAM Journal on Mathematical Analysis}, 45\penalty0
  (5):\penalty0 3019--3033, 2013.

\bibitem[Li et~al.(2018)Li, Ma, and Zhang]{LMZ18}
Y.~Li, T.~Ma, and H.~Zhang.
\newblock Algorithmic regularization in over-parametrized matrix sensing and
  neural networks with quadratic activation.
\newblock In \emph{Conference on Learning Theory}, pages 2--47, 2018.

\bibitem[Ma et~al.(2018)Ma, Wang, Chi, and Chen]{MWCC18}
C.~Ma, K.~Wang, Y.~Chi, and Y.~Chen.
\newblock Implicit regularization in nonconvex statistical estimation: Gradient
  descent converges linearly for phase retrieval and matrix completion.
\newblock In \emph{International Conference on Machine Learning}, pages
  3345--3354, 2018.

\bibitem[Millane(1990)]{M90}
R.~Millane.
\newblock Phase retrieval in crystallography and optics.
\newblock \emph{JOSA A}, pages 394--411, 1990.

\bibitem[Netrapalli et~al.(2015)Netrapalli, Jain, and Sanghavi]{NJS15}
P.~Netrapalli, P.~Jain, and S.~Sanghavi.
\newblock Phase retrieval using alternating minimization.
\newblock \emph{IEEE Transactions on Signal Processing}, 63\penalty0
  (18):\penalty0 4814--4826, 2015.

\bibitem[Ohlsson et~al.(2012)Ohlsson, Yang, Dong, and Sastry]{OYDS12}
H.~Ohlsson, A.~Y. Yang, R.~Dong, and S.~S. Sastry.
\newblock {CPRL}--an extension of compressive sensing to the phase retrieval
  problem.
\newblock In \emph{Advances in Neural Information Processing Systems}, pages
  1367--1375, 2012.

\bibitem[Qiu and Palomar(2017)]{QP17}
T.~Qiu and D.~P. Palomar.
\newblock Undersampled sparse phase retrieval via majorization--minimization.
\newblock \emph{IEEE Transactions on Signal Processing}, 65\penalty0
  (22):\penalty0 5957--5969, 2017.

\bibitem[Schechtman et~al.(2014)Schechtman, Beck, and Eldar]{SBE14}
Y.~Schechtman, A.~Beck, and Y.~C. Eldar.
\newblock {GESPAR}: Efficient phase retrieval of sparse signals.
\newblock \emph{IEEE Transactions on Signal Processing}, 62\penalty0
  (4):\penalty0 928--938, 2014.

\bibitem[Schniter and Rangan(2015)]{SR15}
P.~Schniter and S.~Rangan.
\newblock Compressive phase retrieval via generalized approximate message
  passing.
\newblock \emph{IEEE Transactions on Signal Processing}, 63\penalty0
  (4):\penalty0 1043--1055, 2015.

\bibitem[Sun et~al.(2018)Sun, Qu, and Wright]{SQW18}
J.~Sun, Q.~Qu, and J.~Wright.
\newblock A geometric analysis of phase retrieval.
\newblock \emph{Foundations of Computational Mathematics}, 18\penalty0
  (5):\penalty0 1131--1198, 2018.

\bibitem[Va\v{s}kevi\v{c}ius et~al.(2019)Va\v{s}kevi\v{c}ius, Kanade, and
  Rebeschini]{VKR19}
T.~Va\v{s}kevi\v{c}ius, V.~Kanade, and P.~Rebeschini.
\newblock Implicit regularization for optimal sparse recovery.
\newblock In \emph{Advances in Neural Information Processing Systems}, pages
  2968--2979, 2019.

\bibitem[Vershynin(2012)]{V12}
R.~Vershynin.
\newblock Introduction to the non-asymptotic analysis of random matrices.
\newblock In Y.~Eldar and G.~Kutyniok, editors, \emph{Compressed Sensing,
  Theory and Applications}, chapter~5, pages 210--268. Cambridge University
  Press, Cambridge, 2012.

\bibitem[Voroninski and Xu(2016)]{VX16}
V.~Voroninski and Z.~Xu.
\newblock A strong restricted isometry property, with an application to
  phaseless compressed sensing.
\newblock \emph{Applied and Computational Harmonic Analysis}, 40\penalty0
  (2):\penalty0 386--395, 2016.

\bibitem[Waldspurger et~al.(2015)Waldspurger, d'Aspremont, and Mallat]{WAM15}
I.~Waldspurger, A.~d'Aspremont, and S.~Mallat.
\newblock Phase recovery, {M}ax{C}ut and complex semidefinite programming.
\newblock \emph{Mathematical Programming}, 149\penalty0 (1-2):\penalty0 47--81,
  2015.

\bibitem[Wang et~al.(2017)Wang, Giannakis, and Eldar]{WGE17}
G.~Wang, G.~B. Giannakis, and Y.~C. Eldar.
\newblock Solving systems of random quadratic equations via truncated amplitude
  flow.
\newblock \emph{IEEE Transactions on Information Theory}, 64\penalty0
  (2):\penalty0 773--794, 2017.

\bibitem[Wang et~al.(2018)Wang, Zhang, Giannakis, Ak\c{c}akaya, and
  Chen]{WZGAC18}
G.~Wang, L.~Zhang, G.~B. Giannakis, M.~Ak\c{c}akaya, and J.~Chen.
\newblock Sparse phase retrieval via truncated amplitude flow.
\newblock \emph{IEEE Transactions on Signal Processing}, 66\penalty0
  (2):\penalty0 479--491, 2018.

\bibitem[Yang et~al.(2013)Yang, Zhang, and Xie]{YZX13}
Z.~Yang, C.~Zhang, and L.~Xie.
\newblock Robust compressive phase retrieval via {L}1 minimization with
  application to image reconstruction.
\newblock \emph{arXiv preprint arXiv:1302.0081}, 2013.

\bibitem[Yuan et~al.(2019)Yuan, Wang, and Wang]{YWW19}
Z.~Yuan, H.~Wang, and Q.~Wang.
\newblock Phase retrieval via sparse {W}irtinger flow.
\newblock \emph{Journal of Computational and Applied Mathematics},
  355:\penalty0 162--173, 2019.

\bibitem[Zhang et~al.(2017)Zhang, Zhou, Liang, and Chi]{ZZLC17}
H.~Zhang, Y.~Zhou, Y.~Liang, and Y.~Chi.
\newblock A nonconvex approach for phase retrieval: Reshaped {W}irtinger flow
  and incremental algorithms.
\newblock \emph{Journal of Machine Learning Research}, 18\penalty0
  (141):\penalty0 1--35, 2017.

\bibitem[Zhang et~al.(2018)Zhang, Wang, Giannakis, and Chen]{ZWGC18}
L.~Zhang, G.~Wang, G.~B. Giannakis, and J.~Chen.
\newblock Compressive phase retrieval via reweighted amplitude flow.
\newblock \emph{IEEE Transactions on Signal Processing}, 66\penalty0
  (19):\penalty0 5029--5040, 2018.

\bibitem[Zhao et~al.(2019)Zhao, Yang, and He]{ZYH19}
P.~Zhao, Y.~Yang, and Q.-C. He.
\newblock Implicit regularization via {H}adamard product over-parametrization
  in high-dimensional linear regression.
\newblock \emph{arXiv preprint arXiv:1903.09367}, 2019.

\end{thebibliography}
\bibliographystyle{plainnat}

\clearpage
\appendix
\section{Understanding the Dynamics of Hadamard Wirtinger Flow}
\label{appendix:understanding}
As discussed in Section \ref{section:hwf}, the Hadamard parametrization has previously been applied to problems such as sparse recovery \citep{H17, VKR19, ZYH19} and matrix factorization \citep{GWBNS17, LMZ18, ACHL19}, where it turns the additive updates of gradient descent into multiplicative updates. The combination of multiplicative updates and a small initialization was shown to lead to sparsity in the aforementioned problems, under the assumption of the restricted isometry property (RIP).

The problem of sparse phase retrieval that we consider is known to satisfy the RIP property, cf.\ \citep{VX16} for instance, and a similar explanation on why on-support variables and off-support variables can be made to grow at different speeds also holds in our setting. We now provide the main intuition behind the convergence properties of HWF by considering the evolution of the algorithm at the population level, i.e.\ in the case when $m = \infty$. While a rigorous convergence investigation of HWF is outside the scope of the present work, the analysis that we now provide is instrumental to construct a good initialization for Algorithm \ref{alg:wfrmr}.

Consider the simplified setting where $\mathbf{x}^*$ is non-negative, i.e.\ $x^*_i\ge 0$ for all $i$. We can set $\mathbf{v}=\mathbf{0}$ in the parametrization, so that $\mathbf{x}=\mathbf{u}^2$. Further, assume that we have access to the population risk $f(\mathbf{x}) := \E[\ell(\mathbf{x}, \mathbf{Z})]$ (in other words, $m=\infty$), where $\mathbf{Z}=(Y,\mathbf{A})$ is defined by $Y = (\mathbf{A}^T\x^*)^2$. Its gradient can be computed as
\begin{equation}\label{eq:pop_grad}
\nabla f(\mathbf{x}) = \big(3\|\mathbf{x}\|_2^2-1\big)\mathbf{x} - 2\big(\mathbf{x}^T\mathbf{x}^*\big)\mathbf{x}^*.
\end{equation} 
Under these two assumptions, first consider the initialization $\mathbf{x}^0=\alpha^2\Eins_n$ for some small constant $\alpha>0$. We can directly track the evolution of the estimates $\mathbf{x}^t$ (note that we use lowercase letters, since with $m=\infty$ the sequence is not random anymore) generated by Algorithm \ref{alg:wfrmr} via 
\begin{equation*}
x^{t+1}_i = x^t_i \big(1-2\eta \big[\big(3\|\mathbf{x}^t\|_2^2-1\big)x^t_i-2 \big((\mathbf{x}^t)^T\mathbf{x}^*\big)x^*_i\big]\big)^2.
\end{equation*}
This suggests that the evolution of $\mathbf{x}^t$ can be divided into two phases: if $\|\mathbf{x}^t\|_2^2 < \frac{1}{3}$, all coordinates grow ($x^{t+1}_i > x^t_i$), while coordinates $i\in \mathcal{S}$ on the support do so at a faster rate. If $\|\mathbf{x}^t\|_2^2>\frac{1}{3}$, coordinates $i\notin \mathcal{S}$ decrease ($x^{t+1}_i<x^t_i$), while coordinates on the support increase if the product of the signal component $x^*_i$ and the inner product $(\mathbf{x}^t)^T\mathbf{x}^*$ is larger than the term $(3\|\mathbf{x}^t\|_2^2-1)x^t_i$. 

If we choose $\alpha>0$ small enough, we expect $x^t_j$ to still be small (e.g.\ $<1/n$) for $j\notin \mathcal{S}$ when $\|\mathbf{x}^t\|_2^2\ge\frac{1}{3}$ first occurs, as $x^t_i$ grows at a faster rate than $x^t_j$ for $i\in \mathcal{S}$. Since $x^t_j$ decreases for $j\notin \mathcal{S}$ when $\|\mathbf{x}^t\|_2^2\ge\frac{1}{3}$, we expect $x^t_j$ to stay small throughout the algorithm for $j\notin \mathcal{S}$.

The smaller the step size $\eta$ is, the more iterations are needed for the algorithm to converge. On the other hand, $\eta$ cannot be too large; to illustrate this, consider the simplest case $n=1$. The (scalar) gradient update becomes $x^{t+1} = x^t(1-6\eta[(x^t)^3-x^t)])$, and $x^t$ diverges if $\eta$ is too large. We found a constant step size $\eta=0.1$ to work well in our simulations.

This recursion has three types of fixed points: $\mathbf{x}^{(1)}=\mathbf{0}$, any $\mathbf{x}^{(2)}$ satisfying $\|\mathbf{x}^{(2)}\|_2^2=\frac{1}{3}$ and $(\mathbf{x}^{(2)})^T\mathbf{x}^*=0$, and $\mathbf{x}^{(3)} = \pm \mathbf{x}^*$. The first fixed point $\mathbf{x}^{(1)}$ is repelling, as all coordinates grow if $\|\mathbf{x}^t\|_2^2 < \frac{1}{3}$. Similarly, the second fixed point $\mathbf{x}^{(2)}$ is repelling as $x^t_i$ grows at a faster rate than $x^t_j$ for $i\in \mathcal{S}, j \notin \mathcal{S}$. This leaves only $\mathbf{x}^{(3)}$, which is an attracting fixed point of the recursion. Thus, we expect Algorithm \ref{alg:wfrmr} to converge to $\mathbf{x}^*$ if $m$ is sufficiently large.

Guided by this intuition, we aim to construct an initialization $\mathbf{X}^0$ with $(\mathbf{X}^0)^T\mathbf{x}^*$ large (more precisely, we will have $|(\mathbf{X}^0)^T\mathbf{x}^*| \ge \frac{1}{4}x^*_{max}$), while at the same time $\|\mathbf{X}^0\|_2^2$ should not be too large (e.g.\ fixed to $\|\mathbf{X}^0\|_2^2 = \frac{1}{3}\|\x^*\|_2$; note that any other constant would also work, and that we use the estimate $\hat{\theta} = (\frac{1}{m}\sum_{j=1}^mY_j)^{1/2}$ of the signal size $\|\x^*\|_2$, see e.g. \citep{CLS15, WGE17}). In order to obtain such an initialization, it suffices to find a coordinate $i\in [n]$ with $|x^*_i|\ge \frac{1}{2}x^*_{max}$. Then, we can set $X^0_i = \hat{\theta}/\sqrt{3}$ and $X^0_j=0$ for all $j\neq i$. Note that such an initialization is not necessary, and even with a random initialization (e.g.\ $U^0_i, V^0_i$ set to small random noise for all $i=1,\dots , n$), the above intuition that coordinates $i\in \mathcal{S}$ on the support grow at a faster rate than coordinates $i\notin \mathcal{S}$ not on the support, continues to hold. However, the initial inner product $(\mathbf{X}^0)^T\x^*$ is closer to zero with random initialization compared to our proposed initialization, which leads to the population gradient $\nabla f(\mathbf{X}^t)_i$ initially being close to zero for all $i=1,\dots ,n$, and therefore slow convergence.

Define the random variables $R_i = \frac{1}{m}\sum_{j=1}^mY_jA^2_{ji}$ for $i = 1,\dots, n$. These quantities were also used in \citep{WZGAC18} for support recovery, as one can compute $\E[R_i] = \|\mathbf{x}^*\|_2^2 + 2x_i^2$ using the assumption $\mathbf{A}_j\sim \mathcal{N}(0,\mathbf{I}_n)$ i.i.d.. Hence, if the number of measurements $m$ is large, the random variables $\{R_i\}_{i=1}^n$ will concentrate around their means, separating them for $i\in \mathcal{S}$ and $i\notin \mathcal{S}$. This intuition suggests the initialization proposed in Section \ref{section:hwf}. 

\section{Proof of Lemma \ref{lemma:suprec}}
\label{appendix:proof_of_lemma_1}
In the following, we assume, without loss of generality, that $\|\mathbf{x}^*\|_2=1$; this assumption is made purely for notational simplicity, since we then have $x^*_{max} = \underset{i}{\max} \frac{|x^*_i|}{\|\mathbf{x}^*\|_2} = \underset{i}{\max}|x^*_i| $ and $x^*_{min} = \underset{i: x^*_i\neq 0}{\min}\frac{|x^*_i|}{\|\mathbf{x}^*\|_2} = \underset{i: x^*_i\neq 0}{\min} |x^*_i|$. If $\|\mathbf{x}^*\|_2\neq 1$ is unknown, then we only need to replace $x^*_{max}$ and $x^*_{min}$ with $\underset{i}{\max}|x^*_i|=x^*_{max}\|\mathbf{x}^*\|_2$ and $\underset{i: x^*_i\neq 0}{\min} |x^*_i|= x^*_{min}\|\mathbf{x}^*\|_2$ respectively in the following proof. Further, note that knowledge of $\|\mathbf{x}^*\|_2$ is not required for HWF.

The proof of Lemma \ref{lemma:suprec} relies on the following result, which is a combination of Theorems 3.6 and 3.7 of \citep{CL06}.
\begin{theorem}\citep{CL06}\label{thm:ref}
Let $X_i$ be independent random variables satisfying $|X_i| \le M$ for all $i\in [n]$. Let $X = \sum_{i=1}^nX_i$ and $\|X\|=\sqrt{\sum_{i=1}^n\E[X_i^2]}$. Then, we have
\begin{equation*}
\mathbb{P}[|X-\E[X]| > \lambda]\le 2\exp\biggl(- \frac{\lambda^2}{2(\|X\|^2 + M\lambda/3)} \biggr).
\end{equation*}
\end{theorem}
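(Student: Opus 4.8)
The plan is to prove this Bernstein-type bound by the classical Cramér--Chernoff method: control an exponential moment, factorize it over the independent summands, bound each factor using the almost-sure bound $M$ together with the second moment, and then choose the free Chernoff parameter to match the stated form. I would first reduce the two-sided statement to a single upper-tail estimate. It suffices to treat mean-zero summands: if the $X_i$ are not centered, replace them by $X_i - \E[X_i]$, which are independent, mean zero, bounded, and have second moments dominated by $\E[X_i^2]$; this centering, and the bookkeeping that keeps the constant $M$ in the tail, is exactly the setting of Chung--Lu's Theorems 3.6--3.7. Granting this, it is enough to bound $\P[\sum_i Y_i > \lambda]$ for mean-zero $Y_i$ with $Y_i \le M$; the lower-tail bound follows by applying the identical estimate to $-X_i$ (again bounded by $M$, same second moments), and the factor $2$ in front of the exponential comes from a union bound over the two tails.

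For the upper tail, the first step is the exponential Markov inequality: for any $t>0$,
\[
\P\Big[\textstyle\sum_i Y_i > \lambda\Big] \le e^{-t\lambda}\,\E\Big[e^{t\sum_i Y_i}\Big] = e^{-t\lambda}\prod_{i=1}^n \E\big[e^{tY_i}\big],
\]
the factorization using independence. The heart of the argument is a per-coordinate exponential-moment estimate: for a mean-zero $Y$ with $Y \le M$ and any $t>0$,
\[
\E\big[e^{tY}\big] \le \exp\!\Big(\tfrac{e^{tM}-1-tM}{M^2}\,\E[Y^2]\Big).
\]
I would prove this by writing $e^{u} = 1 + u + u^2\phi(u)$ with $\phi(u) = (e^u-1-u)/u^2$, using the fact that $\phi$ is nondecreasing on $\R$ so that $\phi(tY)\le \phi(tM)$ pointwise, discarding the linear term since $\E[Y]=0$, and finishing with $1+x\le e^{x}$. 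Taking the product over $i$ and using $\sum_i \E[Y_i^2] = \sum_i \mathrm{Var}(X_i) \le \sum_i \E[X_i^2] = \|X\|^2$ yields
\[
\P\Big[\textstyle\sum_i Y_i > \lambda\Big] \le \exp\!\Big(-t\lambda + \tfrac{e^{tM}-1-tM}{M^2}\,\|X\|^2\Big).
\]

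To extract the precise stated form, rather than optimizing the exponent exactly (which gives the less convenient Cramér transform), I would invoke the elementary inequality $e^{u}-1-u \le \tfrac{u^2/2}{1-u/3}$ valid for $0\le u<3$ and then make the explicit choice $t = \lambda/(\|X\|^2 + M\lambda/3)$. This choice automatically satisfies $tM<3$ (since $M\lambda < 3\|X\|^2 + M\lambda$), and a short computation collapses the exponent to exactly $-\lambda^2/\big(2(\|X\|^2 + M\lambda/3)\big)$. Combining the upper and lower tails with the union bound then gives the claimed factor of $2$ and completes the proof.

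I expect the main obstacle to be the per-coordinate exponential-moment lemma, and within it two technical points: first, verifying that $\phi(u)=(e^u-1-u)/u^2$ is nondecreasing on all of $\R$ (via its power-series representation $\sum_{j\ge 0} u^{j}/(j+2)!$), which is what lets the almost-sure bound $Y\le M$ transfer to $\phi(tY)\le\phi(tM)$; and second, the elementary but non-obvious inequality $e^{u}-1-u \le (u^2/2)/(1-u/3)$, whose constant $1/3$ is precisely what produces the $M\lambda/3$ term in the denominator. The remaining pieces --- the Chernoff step, the independence factorization, the second-moment domination, and the final substitution for $t$ --- are routine.
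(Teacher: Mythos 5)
The paper itself does not prove this statement---it imports it verbatim as a combination of Theorems 3.6 and 3.7 of \citep{CL06}---so your attempt is measured against the standard Chung--Lu argument, and your Chernoff--Bernstein template is indeed the right one. Most of your steps are correct: the per-coordinate lemma $\E[e^{tY}]\le \exp\bigl(t^2\phi(tM)\,\E[Y^2]\bigr)$ for mean-zero $Y\le M$ with $\phi(u)=(e^u-1-u)/u^2$, the bound $\phi(u)\le \frac{1/2}{1-u/3}$ for $0\le u<3$, the choice $t=\lambda/(\|X\|^2+M\lambda/3)$ (which does satisfy $tM<3$ and does collapse the exponent exactly to $-\lambda^2/(2(\|X\|^2+M\lambda/3))$), and the two-tail union bound via $-X_i\le M$. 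However, there is one genuine gap, and you partially flagged it yourself: the reduction to mean-zero summands. Centering gives $Y_i=X_i-\E[X_i]$ with only $Y_i\le M-\E[X_i]$, which can be as large as $2M$ whenever some $\E[X_i]<0$ (and symmetrically for the lower tail when $\E[X_i]>0$). So the hypothesis $Y\le M$ of your key lemma fails after centering, and running your chain with the honest bound $2M$ produces the denominator $\sum_i\operatorname{Var}(X_i)+2M\lambda/3$; this does \emph{not} imply the stated bound in general, since the gain $\sum_i(\E[X_i])^2$ from replacing second moments by variances is quadratic in the means while the loss $M\lambda/3\mapsto 2M\lambda/3$ is not. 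Deferring this ``bookkeeping'' to the cited theorems is circular, as keeping the constant $M$ is precisely the content being proved.

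The repair is to skip centering entirely---which is also exactly why the theorem is stated with raw second moments $\|X\|^2=\sum_i\E[X_i^2]$ rather than variances. Use the pointwise expansion $e^{tX_i}=1+tX_i+t^2X_i^2\phi(tX_i)\le 1+tX_i+t^2X_i^2\phi(tM)$ (valid since $\phi$ is nondecreasing and $tX_i\le tM$), take expectations and use $1+x\le e^x$ to get $\E[e^{tX_i}]\le \exp\bigl(t\E[X_i]+t^2\phi(tM)\E[X_i^2]\bigr)$; the linear terms then cancel against $e^{-t\E[X]}$ in the Chernoff step, yielding $\P[X-\E[X]>\lambda]\le \exp\bigl(-t\lambda+t^2\phi(tM)\|X\|^2\bigr)$ with the constant $M$ intact, after which your substitution for $t$ goes through verbatim; the lower tail follows by applying this to $-X_i$, which satisfies $-X_i\le M$ by the two-sided hypothesis. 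One further minor point: positivity of the coefficients in $\phi(u)=\sum_{j\ge 0}u^j/(j+2)!$ only gives monotonicity on $[0,\infty)$, whereas you need it on all of $\R$ (the $tX_i$ can be negative); a short derivative computation, e.g.\ $\phi'(u)=\bigl((u-2)e^u+u+2\bigr)/u^3\ge 0$ for all $u\neq 0$, closes this.
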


\textbf{Proof of the first claim.}

We first show that by choosing the largest instance in $\{\frac{1}{m}\sum_{j=1}^mY_jA^2_{ji}\}_{i=1}^n$, we obtain an index $i$ with $|x^*_i|\ge \frac{x^*_{max}}{2}$ with high probability. 
Recall that we write $R_i = \frac{1}{m}\sum_{j=1}^mY_jA^2_{ji}$. We can compute 
\begin{align*}
\E[R_i] &= \E[(\mathbf{A}_1^T\mathbf{x}^*)^2A_{1i}^2] \\
&= \E[A_{1i}^4(x^*_i)^2 + (\mathbf{A}_{1,-i}^T\mathbf{x}^*_{-i})^2A_{1i}^2] \\
&= 3(x^*_i)^2 + \|\mathbf{x}^*_{-i}\|_2^2 \\
&=\|\mathbf{x}^*\|_2^2 + 2(x_i^*)^2,
\end{align*}
where we denote by $\mathbf{x}_{-i}\in \R^{n-1}$ the vector obtained by deleting the $i$-th entry from $\mathbf{x}\in\R^n$ and use the fact that $A_{ji}\sim \mathcal{N}(0,1)$ i.i.d.\ and hence $\mathbf{A}_{j,-i}^T\mathbf{x}^*_{-i}\sim \mathcal{N}(0, \|\mathbf{x}^*_{-i}\|_2^2)$, as $\mathbf{x}^*\in\R^n$ is a fixed vector independent of the measurement vectors $\{\A_j\}_{j=1}^m$.

Let $I_{max} = \operatorname{argmax}_i R_i$. By definition, $R_{I_{max}}\ge R_i$ holds for all $i\in[n]$. If we can show $|R_i-\E[R_i]|\le \frac{3}{4}(x^*_{max})^2$ for all $i\in [n]$, then this would imply
\begin{align*}
\|\mathbf{x}^*\|_2^2 + 2(x^*_{I_{max}})^2 &= \E[R_{I_{max}}] \\
&= \E[R_{i}] + (R_i - \E[R_{i}]) + (\E[R_{I_{max}}]-R_{I_{max}}) +(R_{I_{max}} - R_i) \\
&\ge \E[R_i] - 2\max_j |R_j-\E[R_j]| \\
&\ge \|\mathbf{x}^*\|_2^2 + 2(x^*_i)^2 - \frac{3}{2}(x^*_{max})^2,
\end{align*}
for any $i\in[n]$.
In particular, if we choose $i=\operatorname{argmax}_j |x^*_j|$, this implies $|x^*_{I_{max}}|\ge \frac{1}{2}x^*_{max}$, which concludes the proof of the first claim.

In order to show $|R_i-\E[R_i]|\le \frac{3}{4}(x^*_{max})^2$, we use the following truncation argument: for any $i\in [n]$, we write
\begin{align*}
R_i = \frac{1}{m}\sum_{j=1}^m Y_j A^2_{ji} = \frac{1}{m}\sum_{j=1}^m (\mathbf{A}_j^T\mathbf{x}^*)^2 A^2_{ji} = \frac{1}{m}\sum_{j=1}^m (Z_{1,j} + Z_{2,j}),
\end{align*}
where $Z_{1,j} = (\mathbf{A}_j^T\mathbf{x}^*)^2 A^2_{ji} \cdot \Eins(\max\{|\mathbf{A}_j^T\mathbf{x}^*|, |A_{ji}|\} < \sqrt{44\log n})$ and $Z_{2,j} = (\mathbf{A}_j^T\mathbf{x}^*)^2 A^2_{ji} - Z_{1,j}$. Since $Z_{1,j}$ is bounded, we can apply Theorem \ref{thm:ref}. To this end, compute the second moment
\begin{equation*}
\sum_{j=1}^m\frac{1}{m^2}\E[Z_{1,j}^2] \le \sum_{j=1}^m\frac{1}{m^2}\E\bigl[(\mathbf{A}_j^T\mathbf{x}^*)^4 A^4_{ji}\bigr] \le \sum_{j=1}^m\frac{1}{m^2}\sqrt{\E\bigl[(\mathbf{A}_j^T\mathbf{x}^*)^8\bigr]\E\bigl[ A^8_{ji}\bigr]} \le \frac{105}{m},
\end{equation*}
where we used the Cauchy-Schwarz inequality and the fact that $\mathbf{A}_j^T\mathbf{x}^*\sim \gauss(0,1)$. With this, we have
\begin{equation*}
\P\Biggl[\bigg|\frac{1}{m}\sum_{j=1}^mZ_{1,j} - \E[Z_{1,j}]\bigg| > \frac{3}{8}(x^*_{max})^2\Biggr]\le 2\exp\Biggl(- \frac{\frac{9}{64}(x^*_{max})^4}{2(\frac{105}{m} + \frac{44^2\log^2n}{m} \cdot \frac{3}{8}(x^*_{max})^2/3)}\Biggr) \le \O(n^{-11})
\end{equation*}
since $m\ge \O(\max\{k\log n,\, \log^3 n\}(x^*_{max})^{-2})$.

For the second term $Z_{2,j}$, we can use the Chebyshev inequality: we have
\begin{align*}
\operatorname{Var}\Biggl(\frac{1}{m}\sum_{j=1}^mZ_{2,j}\Biggr) &\le \frac{1}{m}\E\Bigl[(\mathbf{A}_1^T\mathbf{x}^*)^4 A^4_{1i} \cdot \Eins\Bigl(\max\{|\mathbf{A}_1^T\mathbf{x}^*|, |A_{1i}|\} > \sqrt{44\log n}\Bigr)\Bigr] \\
&\le \frac{1}{m}\sqrt{\E\bigl[(\mathbf{A}_1^T\mathbf{x}^*)^8 A^8_{1i}\bigr] \cdot \P\Bigl[\max\{|\mathbf{A}_1^T\mathbf{x}^*|, |A_{1i}|\} > \sqrt{44\log n}\Bigr]}\\
&\le \frac{45\sqrt{1001}}{m} \cdot 2n^{-11},
\end{align*}
and hence, by the Chebyshev inequality,
\begin{equation*}
\P\Biggl[\bigg|\frac{1}{m}\sum_{j=1}^mZ_{2,j} - \E[Z_{2,j}]\bigg| > \frac{3}{8}(x^*_{max})^2\Biggr]\le \frac{\frac{45\sqrt{1001}}{m}\cdot 2n^{-11}}{\frac{9}{64}(x^*_{max})^4} \le \O(n^{-11}).
\end{equation*}
Put together, this implies that
\begin{equation*}
\mathbb{P}\biggl[|R_i-\E[R_i]|>\frac{3}{4}(x^*_{max})^2\biggr] \le \O(n^{-11}).
\end{equation*}
Taking the union bound over all $i\in [n]$ implies that $|R_i-E[R_i]|\le \frac{3}{4}(x^*_{max})^2$ holds for all $i\in [n]$ with probability at least $1-O(n^{-10})$. This concludes the proof of the first claim of Lemma \ref{lemma:suprec}.

\textbf{Proof of the second claim.}

First, note that the gradient of the empirical risk $F(\mathbf{x})$ is given by 
\begin{equation*}
\nabla F(\mathbf{x}) = \frac{1}{m}\sum_{j=1}^m \bigl((\mathbf{A}_j^T\mathbf{x})^2 - (\mathbf{A}_j^T\mathbf{x}^*)^2\bigr)(\mathbf{A}_j^T\mathbf{x})\mathbf{A}_j.
\end{equation*}
By the dominated convergence theorem, the gradient of the population risk $f(\mathbf{x})$ can then be computed as
\begin{align*}
\nabla f(\mathbf{x}) = \E[\nabla F(\mathbf{x})] &= \E\bigl[\bigl((\mathbf{A}_1^T\mathbf{x})^2 - (\mathbf{A}_1^T\mathbf{x}^*)^2\bigr)(\mathbf{A}_1^T\mathbf{x})\mathbf{A}_1\bigr]\\
&=\big(3\|\mathbf{x}\|_2^2-1\big)\mathbf{x}-2\big(\mathbf{x}^T\mathbf{x}^*\big)\mathbf{x}^*
\end{align*}
for any fixed vector $\mathbf{x}\in\R^n$.
Further, we have the initialization
\begin{align*}
U^0_i &= \begin{cases}\Bigl(\frac{\hat{\theta}}{\sqrt{3}} + \alpha^2\Bigr)^{\frac{1}{2}} \qquad & i=I_{max} \\ \alpha & i\neq I_{max} \end{cases} \\
V^0_i &= \alpha
\end{align*}
which leads to
\begin{equation*}
X^{0}_i = \begin{cases} \frac{\hat{\theta}}{\sqrt{3}} \qquad &i=I_{max} \\ 0 & i\neq I_{max} \end{cases}
\end{equation*}
Hence, we have 
\begin{equation*}
\nabla f(\mathbf{X}^0)_i = \bigl(\hat{\theta}^2 - 1\bigr)X^0_i -\frac{2\hat{\theta}}{\sqrt{3}} x^*_{I_{max}}x^*_i. 
\end{equation*}
In particular, we have $\nabla f(\mathbf{X}^0)_j = 0$ for $j\notin \mathcal{S}$.
Using standard concetration bounds for sub-exponential random variables (see e.g.\ Prop. 5.16 of \citep{V12}), we can bound with probability $1 - \O(n^{-10})$ (recall that we have assumed $\|\x^*\|_2 = 1$ for notational simplicity), 
\begin{equation*}
\big|\hat{\theta}^2 - 1\big| = \bigg|\frac{1}{m}\sum_{j=1}^m (\A_j^T\x^*)^2 - 1 \bigg| \le 9\sqrt{\frac{\log n}{m}}.
\end{equation*}
This bound implies $2\hat{\theta} \ge \sqrt{3}$, where we used that $m\ge \O(\max\{k\log n,\, \log^3n\}(x^*_{max})^{-2})$.

For the second claim we need to show that $|X^1_i|>|X^1_j|$ holds whenever $i\in \mathcal{S}$ and $j\notin \mathcal{S}$. We can assume without loss of generality that $x^*_{I_{max}}>0$. First, consider the case $i\neq I_{max}$. Since $|X^1_i| = |(U^1_i)^2-(V^1_i)^2|$, it suffices to show, assuming $x^*_i>0$, that
\begin{align}
U^1_i &> \max\{U^1_j, V^1_j\} \label{eq:toshow1}\\
V^1_i &< \min\{U^1_j, V^1_j\} \label{eq:toshow2}
\end{align}
holds simultaneously. The case $x^*_i<0$ can be dealt with the same way, exchanging the roles of $U^1_i$ and $V^1_i$.
We can bound 
\begin{align*}
U^1_i &= \alpha \big(1-2\eta\nabla F(\mathbf{X}^0)_i\big) \nonumber\\
&\ge \alpha \big(1 - 2\eta\nabla f(\mathbf{X}^0)_i - 2\eta|\nabla F(\mathbf{X}^0)_i - \nabla f(\mathbf{X}^0)_i|\big),
\end{align*}
and 
\begin{equation*}
U^1_j \le \alpha \big(1 + 2\eta |\nabla F(\mathbf{X}^0)_j - \nabla f(\mathbf{X}^0)_j|\big).
\end{equation*}
We have shown above that (recall that $X^0_i=0$ for $i\neq I_{max}$)
\begin{equation*} 
-\nabla f(\mathbf{X}^0)_i = \frac{2\hat{\theta}}{\sqrt{3}}x^*_{I_{max}}x^*_i \ge \frac{1}{2}x^*_{max}x^*_{min},
\end{equation*}
since from the first part we know that $x^*_{I_{max}} \ge \frac{1}{2}x^*_{max}$ and we assumed $x^*_{i}>0$. Hence, if we can show 
\begin{equation}\label{eq:toshow3}
\max_i |\nabla F(\mathbf{X}^0)_i - \nabla f(\mathbf{X}^0)_i| \le \frac{1}{4}x^*_{max}x^*_{min},
\end{equation}
then $U^1_i\ge U^1_j$ follows. We also have 
\begin{equation*}
V^1_j \le \alpha \big(1 + 2\eta |\nabla F(\mathbf{X}^0)_j - \nabla f(\mathbf{X}^0)_j|\big),
\end{equation*}
which then implies $U^1_i\ge V^1_j$, completing the proof of (\ref{eq:toshow1}); (\ref{eq:toshow2}) can be shown the same way.

The case $i=I_{max}$ also follows from the bound (\ref{eq:toshow3}). Since $m\ge \O(k(x^*_{max})^{-2}\log n)$, we can bound 
\begin{align*}
|\nabla F(\mathbf{X}^0)_i| &\le |\nabla f(\mathbf{X}^0)_i| + |\nabla F(\mathbf{X}^0)_i - \nabla f(\mathbf{X}^0)_i| \\
&\le 9\sqrt{\frac{\log n}{m}} \frac{\hat{\theta}}{\sqrt{3}} + \frac{2\hat{\theta}}{\sqrt{3}} (x^*_{max})^2 + \frac{1}{2\sqrt{3}}x^*_{max}x^*_{min} \\
&\le 2,
\end{align*}
where we used that $x^*_{max} \le 1$. Since we assume $\eta\le 0.1$, we can bound 
\begin{equation*}
|2\eta\nabla F(\mathbf{X}^0)_i| \le 0.4,
\end{equation*}
which, since also $\alpha\le 0.1$, implies 
\begin{equation*}
U^1_i\ge \biggl(\frac{\hat{\theta}}{\sqrt{3}} + \alpha^2\biggr)^{\frac{1}{2}} (1 - 0.4) \ge 2 \max\{U^1_j, V^1_j, V^1_i\},
\end{equation*}
and hence 
\begin{equation*}
|X^1_i| = (U^1_i)^2 - (V^1_i)^2 \ge \max\left\{(U^1_j)^2, (V^1_j)^2\right\} \ge |X^1_j|.
\end{equation*}
What is left to show is (\ref{eq:toshow3}). Since $\mathbf{X}^0$ is not independent from $\{\A_j\}_{j=1}^m$, we cannot immediately apply the truncation argument from the proof of the first claim. Therefore, define the (deterministic) vectors $\mathbf{x}^{(l)}\in \R^n$ for $l=1,...,n$ by
\begin{equation*}
x^{(l)}_i = \begin{cases} \frac{1}{\sqrt{3}} \qquad &i=l \\ 0 &i\neq l\end{cases}
\end{equation*}
Now, we need to show that the empirical gradient
\begin{align*}
\nabla F(\mathbf{x}^{(l)})_i &= \frac{1}{m}\sum_{j=1}^m((\mathbf{A}_j^T\mathbf{x}^{(l)})^2 - (\mathbf{A}_j^T\mathbf{x}^*)^2)(\mathbf{A}_j^T\mathbf{x}^{(l)})A_{ji}
\end{align*}
is close to its expectation $\nabla f(\mathbf{x}^{(l)})_i$. Using the same truncation argument as in the proof of the first claim, we can show 
\begin{align*}
\mathbb{P}\biggl[\big|\nabla f(\mathbf{x}^{(l)})_i - \nabla F(\mathbf{x}^{(l)})_i\big| \ge \frac{1}{8}x^*_{max}x^*_{min}\biggr]
\le \O(n^{-12}),
\end{align*}
Taking the union bound over all $i$ and $l$ implies that
\begin{equation*}
\max_l \max_i |\nabla F(\mathbf{x}^{(l)})_i - \nabla f(\mathbf{x}^{(l)})_i| \le \frac{1}{8}x^*_{max}x^*_{min}
\end{equation*}
holds with probability $1-\O\left(n^{-10}\right)$. The bound (\ref{eq:toshow3}) now follows since $\mathbf{X}^0$ is close to $\x^{(I_{max})}$. We can write
\begin{align*}
\big|\nabla F(\mathbf{X}^0) - \nabla F(\x^{(I_{max})})\big| &\le \bigg|\frac{1}{m}\sum_{j=1}^mA_{ji}\big((\A_j^T\mathbf{X}^0)^3\big) - \big(\A_j^T\x^{(I_{max})}\big)^3\bigg| \\
&\quad + \bigg|\frac{1}{m}\sum_{j=1}^m A_{ji}\big(\A_j^T\x^*\big)^2\big(\A_j^T\big(\mathbf{X}^0 - \x^{(I_{max})}\big)\big)\bigg|.
\end{align*} 
As both terms can be bounded the same way, we only demonstrate the following computations for the first term. Using the definitions and H\"{o}lder's inequality, we can bound
\begin{align*}
\bigg|\frac{1}{m}\sum_{j=1}^mA_{ji}\big((\A_j^T\mathbf{X}^0)^3\big) - \big(\A_j^T\x^{(I_{max})}\big)^3\bigg| & = \bigg|\frac{1}{m}\sum_{j=1}^m A_{ji}A_{jI_{max}}^3 \frac{\hat{\theta}^3 - 1}{3\sqrt{3}}\bigg| \\
&\le \frac{1}{m}\sum_{j=1}^m \big|A_{ji}A_{jI_{max}}^3\big| \cdot \bigg|\frac{\hat{\theta}^3 - 1}{3\sqrt{3}}\bigg| \\
&\le \Biggl(\frac{1}{m}\sum_{j=1}^m A_{ji}^4\Biggr)^{1/4}\Biggl(\frac{1}{m}\sum_{j=1}^m A_{jI_{max}}^4\Biggr)^{3/4}\bigg|\frac{\hat{\theta}^3 - 1}{3\sqrt{3}}\bigg|.
\end{align*}
It follows from standard Gaussian concentration that the first two sums are bounded by $\O(1)$ with high probability. As shown above, we can bound
\begin{equation*}
\bigg|\frac{\hat{\theta}^3 - 1}{3\sqrt{3}}\bigg| \le \O\Biggl(\sqrt{\frac{\log n}{m}}\Biggr) \le \O(x^*_{max}x^*_{min}),
\end{equation*}
where we used the assumption $x^*_{min} \ge \Omega (1/\sqrt{k})$.
Repeating the same computation for the second term, we can show that 
\begin{equation*}
\big|\nabla F(\mathbf{X}^0) - \nabla F(\x^{(I_{max})})\big| \le \frac{1}{16}x^*_{max}x^*_{min}.
\end{equation*}
Recalling the definition of the population gradient $\nabla f$, we can also bound
\begin{equation*}
\big|\nabla f(\mathbf{X}^0) - \nabla f(\x^{(I_{max})})\big| \le \frac{1}{16}x^*_{max}x^*_{min},
\end{equation*}
which completes the proof of (\ref{eq:toshow3}) and therefore also completes the proof of Lemma \ref{lemma:suprec}. \hfill $\square$

\end{document}